\documentclass[twoside]{article}

\usepackage[accepted]{aistats2025_arxiv}
\usepackage{graphicx}
\usepackage{hyperref}
\usepackage{url}
\usepackage{amsmath,amssymb,amsthm}
\usepackage{multicol}
\usepackage{xcolor}
\usepackage{natbib}

\newtheorem{theorem}{Theorem}
\newtheorem{definition}{Definition}
\newtheorem{assumption}{Assumption}
\newtheorem{remark}{Remark}
\newtheorem{lemma}{Lemma}

\begin{document}

\twocolumn[

\aistatstitle{A Generalization Bound for a Family of Implicit Networks}

\aistatsauthor{ Samy Wu Fung \And Benjamin Berkels}

\aistatsaddress{Department of Applied Mathematics and Statistics \\  Colorado School of Mines \And Department of Mathematics \\ RWTH Aachen University} ]

\begin{abstract}
Implicit networks are a class of neural networks whose outputs are defined by the fixed point of a parameterized operator. They have enjoyed success in many applications including natural language processing, image processing, and numerous other applications.
While they have found abundant empirical success, theoretical work on its generalization is still under-explored. In this work, we consider a large family of implicit networks defined parameterized contractive fixed point operators.
We show a generalization bound for this class based on a covering number argument for the Rademacher complexity of these architectures.
\end{abstract}

\section{Introduction}
Implicit networks~\cite{el2021implicit} are a class of neural network architectures whose outputs are defined by the fixed point of a parameterized operator $T_{\psi}(x, d)$. Mathematically, given inputs $d \in \mathbb{R}^m$ and network parameters $\theta  =(\phi,\psi)\in \mathbb{R}^p$, an implicit network outputs a fixed point $x_{\psi,d}^\star$, or often applies one additional layer to the fixed point, i.e., it outputs $P_\phi(x_{\psi,d}^\star)$, where
\begin{equation}
    x_{\psi,d}^\star = T_{\psi}\left(x_{\psi,d}^\star \: ; d\right).
    \label{eq: fixed_point}
\end{equation}
These networks are also known as deep equilibrium networks~\cite{bai2019deep} and in some settings, differentiable optimization layers~\cite{amos2017optnet}, and have been found to be effective for different tasks including natural language processing~\cite{bai2019deep}, image processing~\cite{heaton2023explainable, zhao2023deep, gilton2021deep},  classification~\cite{fung2022jfb}, traffic flow~\cite{mckenzie2024three}, control~\cite{gelphman2025end},
shortest path and knapsack problems~\cite{mckenzie2024differentiating}, and more.
The success of implicit networks is owed to their unique ability to incorporate hard constraints directly into the output representation~\cite{heaton2023explainable,heaton2021feasibility}, enabling precise control over the properties of the learned functions. This flexibility allows for the enforcement of geometric, physical, or structural constraints that are often difficult to impose in traditional neural network architectures.
Additionally, implicit networks have empirically been found to extrapolate; that is, they can be trained on easy tasks (e.g., solving small $9 \times 9$ mazes) and solving larger mazes~\cite{anil2022path, knutson2024on}.
These strengths make implicit networks a powerful tool for applications where both adherence to specific constraints and robust extrapolation are crucial.

While many generalization works exist for deep neural networks~\cite{bartlett2017spectrally, golowich2018size},  and continuous time networks~\cite{marion2024generalization}, to our knowledge, only two prior works exist for implicit networks, one based on Monotone Equilibrium Networks~\cite{pabbaraju2020estimating} which relies on the monotonicity properties to establish bounds, and another which examines over-parameterized implicit networks~\cite{gao2022optimization}. This work extends these efforts by presenting a generalization bound applicable to a broader class of implicit networks, encompassing various architectures beyond those previously studied.

\subsection{Our Contribution}
In this work, we prove a generalization bound for a large family of implicit networks defined by a contractive fixed point operator. These encompass implicit architectures such as Monotone Equilibrium Networks~\cite{winston2020monotone}, single-layer implicit networks~\cite{el2021implicit}, and optimization-based implicit networks~\cite{ heaton2023explainable, Yin2022Learning}.
The core idea is to use a covering number argument based on Dudley's inequality to bound the Rademacher complexity.

\begin{table*}[t]
    \caption{Nomenclature}
    \label{tab:table_nomenclature}
    \vspace{1mm}
    \centering
    \begin{tabular}{l|l}
        \hline
        $m$: input dimension & $n$: output dimension
        \\
        $p=p_\Phi+p_\Psi$: number of parameters & $N$: number of samples
        \\
        $d \in \mathbb{R}^m$: input data & $y \in \mathbb{R}^n$: output data
        \\
        $\mathcal{Y} \subset \mathbb{R}^n$: compact set for variables $y$ & $\mathcal{D} \subset \mathbb{R}^m$: compact set for variables $d$
        \\
        $\ell \colon \mathbb{R}^n\times \mathbb{R}^n \to \mathbb{R}$: loss function &  $\theta =(\phi,\psi) \in \mathbb{R}^p$ network parameters
        \\
        $\mathcal{L}(\theta)$: expected loss & $\hat{\mathcal{L}}(\theta)$: empirical loss
        \\
        $S = \left\{ (d_i, y_i) \right\}_{i=1}^N$: sample set & $D = [ d_1, d_2, \ldots, d_N ] \in \mathbb{R}^{m \times n}$: stacked inputs
        \\
        $C_{\text{params}}$: bound on norm of network parameters & $C_\text{out}$: bound on output of neural network
        \\
        $C_\ell$: bound on loss $\ell$ & $T_{\psi}(x;d)$: fixed point operator
        \\
        $L_x$: Lipschitz constant of $T$ w.r.t $x$ & $L_{\psi}$: Lipschitz constant of $T$ w.r.t $\psi$
        \\
        $L$: Lipschitz constant of fixed point mapping & $C_d$ bound on norm of variables $d \in \mathcal{D}$\\
        $k$: dimension of $x$, i.e. $x_{\psi,d}^\star\in\mathbb{R}^k$ & $P_\phi:\mathbb{R}^k\to\mathbb{R}^n$ final layer\\
        $\Theta=\Phi\times\Psi\subset\mathbb{R}^p$:  admissible parameters&
        $B_r(x):=B_r^X(x):=\{y\in X:\|x-y\|_X<r\}$
    \end{tabular}
\end{table*}

\section{Related Work}
\label{sec: related_works}

\noindent \textbf{Continuous-Time Neural Networks}.
To our knowledge, the first generalization bound for continuous-time neural networks (i.e., neural ODEs~\cite{chen2018neural}) is provided in~\cite{marion2024generalization}, where boundedness and Lipschitz assumptions on the parameters of the neural ODE and Lipschitz continuous loss functions are required. \cite{bleisteingeneralization} extend this work for neural controlled differential equations (NCDEs) which use use a Lipschitz-based argument to obtain a sampling-dependent generalization bound for NCDEs.

\noindent \textbf{Recurrent and Unrolled Neural Networks}.
\cite{zhang2018stabilizing} use a PAC-Bayes approach based on the empirical Rademacher complexity to establish a generalization bound for vanilla RNNs. This work is extended by~\cite{chen2020generalization}, which presents a tighter bound by incorporating the boundedness condition of the hidden state into their analysis.
A special class of RNNs, called unrolled networks~\cite{gregor2010learning, adler2018learned} unroll an optimization algorithm to perform inferences. \cite{behboodi2022compressive} provide a generalization bound for this class of networks in the context of compressive sensing. This work is based on bounding the Rademacher complexity and provides a bound that grows logarithmically in the number of layers.

\noindent \textbf{Implicit Networks}.
Our work is most closely related to~\citet{pabbaraju2020estimating}, which provides generalization bounds for a specific class of implicit networks, known as monotone operator deep equilibrium networks (MON-DEQs)~\cite{winston2020monotone}.
In this work, a generalization bound that is also based on bounding the Rademacher complexity is provided for MON-DEQs (see~\cite[Theorem 3]{pabbaraju2020estimating}) following techniques from~\cite{neyshabur2017pac}.
Rather than focus on MONs, our work provides a bound for arbitrary contractive operators $T_{\psi}$.
Another key difference is that our work provides a bound for the standard empirical loss instead of the empirical margin loss~\cite{pabbaraju2020estimating}.  A more recent work proves generalization bounds for fully trained over-parameterized implicit network~\cite{gao2022optimization}, adapting NTK-based analysis to implicit models. Although our bound applies to a broader family of architectures, the complexity of the bounds makes it less clear how their tightness compares.

\section{Preliminaries}
Consider the supervised learning setting with a sample set of input-output pairs $S = \left\{ (d_i, y_i) \right\}_{i=1}^N$ sampled from unknown distribution $\mathbb{P}_{\text{true}}$.
We define the expected and empirical loss by
\begin{equation}
    \begin{split}
    &\mathcal{L}(\theta) = \mathbb{E}_{(d,y) \sim \mathbb{P}_\text{true}} \left[ \ell\big(P_\phi(x^\star_{\psi, d}), y\big)\right] \quad \text{ and }
    \\
    &\hat{\mathcal{L}}(\theta) = \frac{1}{N} \sum_{i=1}^N \ell\big(P_\phi(x^\star_{\psi, d_i}), y_i\big),
    \end{split}
    \label{eq: expected_empirical_loss}
\end{equation}
respectively, where $x_{\psi,d}^\star$ is a fixed point defined in~\eqref{eq: fixed_point}, $\ell \colon \mathbb{R}^n\times \mathbb{R}^n \to \mathbb{R}$ is a discrepancy function and $\mathbb{P}_\text{true}$ is the joint distribution of input-output pairs. For any parameter $\theta$, the generalization bound we prove in this work will provide an upper bound for $|\mathcal{L}(\theta) - \hat{\mathcal{L}}(\theta)|$.
For ease of presentation, we provide a table for the variable nomenclature that will be used throughout this paper in Table~\ref{tab:table_nomenclature}.
Additionally, $\| \cdot \|$ is taken to be the Euclidean norm henceforth (unless stated otherwise) for brevity.

We denote the set of all hypothesis functions by
\begin{equation}
    \begin{split}
    \mathcal{H} = \Big\{ h \colon \mathbb{R}^m \to \mathbb{R}^n \; : \; &\exists \theta =(\phi,\psi)\in \Theta \; \forall d \in \mathbb{R}^m \\&  h(d) = P_\phi(x^\star_{\psi,d})  \Big\}.
    \end{split}
    \label{eq: calH_def}
\end{equation}
Moreover, let $D = [d_1, d_2, \ldots, d_N] \in \mathbb{R}^{m \times N}$ be the concatenated set inputs from the sample set $S$, and define the set of all possible concatenated outputs as
\begin{equation}
    \mathcal{M} = \{ P_\phi(x^\star_{\psi, d}) \in \mathbb{R}^{n \times N} \; : \;  \theta =(\phi,\psi)\in \Theta \}.
    \label{eq: eq: calM_def}
\end{equation}

\begin{definition}
    An operator $F \colon \mathbb{R}^n \to \mathbb{R}^n$ is contractive with parameter $\kappa \in (0,1)$ if
    \begin{equation}
        \| F(x) - F(y) \| \leq \kappa \| x - y \| \quad \forall \: x,y \in \mathbb{R}^n.
    \end{equation}
\end{definition}

\begin{definition}[\cite{Ta21}, Definition 1.4.1]
    \label{def: covering_number}
    Let $(\mathcal{M}, d)$ be a metric space and $r > 0$. Then the \emph{covering number $\mathcal{N}(\mathcal{M}, d, r)$ of $(\mathcal{M}, d)$ at level} $r$ is the smallest $n\in\mathbb{N}$ such that $\mathcal{M}$ can be covered by $n$ balls of radius $r$.
\end{definition}
\begin{remark}
    When the metric is induced by a norm, we write $\mathcal{N}(\mathcal{M}, \| \cdot \|, r)$.
\end{remark}

\begin{definition}[Empirical Rademacher Complexity]
    \label{def: empirical_rademacher_complexity}
    Consider a class of scalar, real-valued functions $\mathcal{G}$ and a sample set $Z = (z_1, z_2, \ldots, z_N)$. Then, the \emph{empirical Rademacher complexity} is defined as
    \begin{equation}
        R_Z(\mathcal{G}) = \mathbb{E}_{\epsilon} \; \sup_{g \in \mathcal{G}} \frac{1}{N} \sum_{i=1}^N \epsilon_i g(z_i)
    \end{equation}
    where $\epsilon \in \mathbb{R}^N$ is a Rademacher vector, i.e., a vector of independent Rademacher variables $\epsilon_i, i = 1, \ldots, N$ taking the values $\pm 1$ with equal probability.
\end{definition}

\begin{remark}
    Note the Rademacher complexity is with respect to an entire class of functions $\mathcal{G}$ in the definition above.
\end{remark}

\begin{definition}[Subgaussian Process, \cite{foucart2013mathematical}, Definition 8.22]
    Consider a stochastic process $(Z_t)_{t \in \mathcal{T}}$ with the index set $\mathcal{T}$ in a space with pseudometric given by
    \begin{equation}
        d(s,t) = \left( \mathbb{E} |Z_s - Z_t|^2  \right)^{1/2}.
        \label{eq: pseudometric_def}
    \end{equation}
    Let $(Z_t)_{t \in \mathcal{T}}$ be centered, i.e., $\mathbb{E}Z_t=0$ for all $t\in\mathcal{T}$. Then, $(Z_t)_{t \in \mathcal{T}}$ is called \emph{subgaussian} if
    \begin{equation}
        \begin{split}
        &\mathbb{E} \exp( \omega (Z_s - Z_t) ) \leq \exp \left(\frac{\omega^2 d(s,t)^2}{2} \right)
        \\
        &\forall \; s,t \in \mathcal{T}, \omega > 0.
        \end{split}
        \label{eq: subgaussian_process_def}
    \end{equation}
    Moreover, the radius of such a process is defined by
    \begin{equation}
        \Delta(\mathcal{T}) = \sup_{t \in \mathcal{T}} \sqrt{\mathbb{E} |Z_t|^2}.
        \label{eq: radius_Rademacher_definition}
    \end{equation}
\end{definition}
\begin{remark}
    In this work, $\mathcal{T}$ will correspond to all possible hypothesis functions generated by a neural network architecture, and $Z_t$ will correspond to the loss evaluated at a particular choice of parameters for a fixed training set.
\end{remark}

The next two results are the key tools we use to show our generalization bounds.

\begin{theorem}[Dudley's Inequality, Theorem 8.23~\cite{foucart2013mathematical}]
    \label{thm: dudley}
    Let $(Z_t)_{t \in \mathcal{T}}$ be a centered subgaussian process with radius $\Delta(\mathcal{T})$. Then
    \begin{equation}
        \mathbb{E} \; \sup_{t \in \mathcal{T}} Z_t \leq 4\sqrt{2}\int_0^{\Delta(\mathcal{T})/2} \sqrt{ \log(\mathcal{N}(\mathcal{T}, d, r))} dr
    \end{equation}
\end{theorem}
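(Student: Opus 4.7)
The plan is to prove Dudley's inequality via the classical chaining argument. I would fix a reference point $t_0 \in \mathcal{T}$ and, for each integer $k \geq 0$, choose a minimal $r_k$-net $\mathcal{T}_k \subset \mathcal{T}$ at the dyadic scales $r_k := 2^{-k}\Delta(\mathcal{T})$, so that $|\mathcal{T}_k|=\mathcal{N}(\mathcal{T}, d, r_k)$ by Definition~\ref{def: covering_number}. Because the process is centered, Cauchy--Schwarz applied to~\eqref{eq: pseudometric_def} gives $d(s,t)\leq 2\Delta(\mathcal{T})$ for all $s,t$, so $|\mathcal{T}_0|=1$ and $\pi_0(t)=t_0$ for every $t$, where $\pi_k(t)$ denotes a nearest point in $\mathcal{T}_k$ to $t$. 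I would first restrict to a finite subset of $\mathcal{T}$ (taking a supremum at the end) to avoid measurability issues and legitimize the telescoping identity
\begin{equation*}
    Z_t - Z_{t_0} = \sum_{k \geq 1}\bigl(Z_{\pi_k(t)} - Z_{\pi_{k-1}(t)}\bigr),
\end{equation*}
in which only finitely many terms are nonzero.

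Next, I would bound the expected supremum at each scale. The triangle inequality gives $d(\pi_k(t),\pi_{k-1}(t))\leq r_k + r_{k-1} = 3 r_k$, and as $t$ ranges over $\mathcal{T}$ the pair $(\pi_k(t),\pi_{k-1}(t))$ takes at most $|\mathcal{T}_k|\cdot|\mathcal{T}_{k-1}|\leq \mathcal{N}(\mathcal{T},d,r_k)^2$ distinct values. Each increment is a centered subgaussian random variable with variance proxy at most $(3r_k)^2$ by~\eqref{eq: subgaussian_process_def}, so the standard maximal inequality $\mathbb{E}\max_{i\leq M}X_i\leq \sigma\sqrt{2\log M}$ for $M$ centered $\sigma$-subgaussians yields
\begin{equation*}
    \mathbb{E}\sup_{t\in\mathcal{T}}\bigl(Z_{\pi_k(t)}-Z_{\pi_{k-1}(t)}\bigr) \leq 6\, r_k\sqrt{\log\mathcal{N}(\mathcal{T},d,r_k)}.
\end{equation*}
Exchanging expectation with the countable sum (valid for our finite reduction) and using $\mathbb{E}Z_{t_0}=0$ produces $\mathbb{E}\sup_t Z_t \leq 6\sum_{k\geq 1}r_k\sqrt{\log\mathcal{N}(\mathcal{T},d,r_k)}$.

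Finally, I would convert the dyadic sum into an integral using monotonicity of $r\mapsto\mathcal{N}(\mathcal{T},d,r)$: because $r_k - r_{k+1}=r_k/2$ and $\mathcal{N}(\mathcal{T},d,s)\geq \mathcal{N}(\mathcal{T},d,r_k)$ for $s\leq r_k$, one has
\begin{equation*}
    \tfrac{r_k}{2}\sqrt{\log\mathcal{N}(\mathcal{T},d,r_k)} \leq \int_{r_{k+1}}^{r_k}\sqrt{\log\mathcal{N}(\mathcal{T},d,s)}\,ds,
\end{equation*}
and telescoping the sum over $k\geq 1$ bounds it by $2\int_{0}^{r_1}\sqrt{\log\mathcal{N}(\mathcal{T},d,s)}\,ds = 2\int_{0}^{\Delta(\mathcal{T})/2}\sqrt{\log\mathcal{N}(\mathcal{T},d,s)}\,ds$, producing a bound of the claimed form.

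The main obstacle will be accounting the constant so as to land exactly on $4\sqrt{2}$: the naive route above yields $12$, and sharpening it requires either reducing the displacement $3r_k$ to $2r_k$ by choosing \emph{nested} nets $\mathcal{T}_{k-1}\subset\mathcal{T}_k$ so that $\pi_{k-1}$ is the projection of $\pi_k$, or using $|\mathcal{T}_{k-1}|\leq|\mathcal{T}_k|$ more carefully inside the logarithm and optimizing the choice of scales. A secondary subtlety is handling uncountable $\mathcal{T}$, which is dealt with by separability or by passing to the supremum over finite subsets.
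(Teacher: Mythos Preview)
The paper does not prove Theorem~\ref{thm: dudley}; it is quoted verbatim from \cite{talagrand2014upper} and used as a black box, with no argument for it appearing in either the main text or the supplementary proofs. There is therefore nothing in the paper against which to compare your proposal.

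For what it is worth, your chaining sketch is the standard route to Dudley's inequality and is essentially what one finds in Talagrand's book, so in that sense you are aligned with the cited source. Two minor technical points to tighten. First, the bound $d(s,t)\leq 2\Delta(\mathcal{T})$ only guarantees a singleton net at radius $2\Delta(\mathcal{T})$, not at $r_0=\Delta(\mathcal{T})$; you should start the dyadic scales one step earlier or otherwise argue that the level-$0$ net can be taken as a single point. Second, on the constant: your suggested fix of defining $\pi_{k-1}(t)$ as the $\mathcal{T}_{k-1}$-projection of $\pi_k(t)$ (rather than of $t$ itself) is exactly what is needed---it makes the link length $d(\pi_k(t),\pi_{k-1}(t))\leq r_{k-1}=2r_k$ and simultaneously reduces the number of distinct links at level $k$ to $|\mathcal{T}_k|$, so the maximal inequality gives $2r_k\sqrt{2\log\mathcal{N}(\mathcal{T},d,r_k)}$ per level, and the sum-to-integral conversion then yields $4\sqrt{2}$ on the nose.
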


\begin{theorem}[Theorem 26.5,~\cite{shalev2014understanding}]
    Let $\mathcal{H}$ be a family of functions, $\mathcal{S}$ be the training set of $N$ input-output pairs drawn from $\mathbb{P}_{\text{true}}$
    and $\ell$ be a real-valued bounded loss function satisfying $|\ell(h(d), y)| \leq C_{\ell}$ for all  $h \in \mathcal{H}$ and data pairs $(d,y) \in \mathbb{R}^m \times \mathbb{R}^n$. Then with probability at least $1- \delta$ we have, for all $h \in \mathcal{H}$,
    \begin{equation}
        \mathcal{L}(h) \leq \hat{\mathcal{L}}(h) + 2 \mathcal{R}_S (\ell \circ \mathcal{H}) + 4C_{\ell} \sqrt{ \frac{2 \log(4/\delta)}{N}},
    \end{equation}
    where $\ell \circ \mathcal{H}$ is to be understood as
    \begin{equation}
        \ell \circ \mathcal{H}=\{\mathbb{R}^m \times \mathbb{R}^n\to\mathbb{R},(d,y)\mapsto l(h(d),y):h\in\mathcal{H}\}.
        \label{eq:l-circ-H}
    \end{equation}
    \label{thm: expected_loss_error}
\end{theorem}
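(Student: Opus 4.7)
The final theorem is a textbook uniform-convergence result whose proof follows the canonical McDiarmid-plus-symmetrization template; I would reproduce the standard argument found in \cite{shalev2014understanding}. The plan is to introduce the one-sided uniform deviation
\[
    \Phi(S) := \sup_{h \in \mathcal{H}} \bigl(\mathcal{L}(h) - \hat{\mathcal{L}}(h)\bigr).
\]
Because $|\ell(h(d), y)| \leq C_\ell$ for every hypothesis and every data pair, replacing a single sample in $S$ alters $\hat{\mathcal{L}}(h)$, and hence $\Phi(S)$, by at most $2C_\ell/N$ uniformly in $h$. McDiarmid's bounded-differences inequality then gives a subgaussian tail for $\Phi(S)$ about its mean with variance proxy of order $C_\ell^2/N$, so that with probability at least $1 - \delta/2$ the quantity $\Phi(S) - \mathbb{E}_S\Phi(S)$ is controlled by a term of the form $C_\ell \sqrt{2\log(2/\delta)/N}$.

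Next I would bound the expectation $\mathbb{E}_S \Phi(S)$ by the \emph{expected} Rademacher complexity using the ghost-sample symmetrization trick: introduce an independent copy $S' = \{(d_i', y_i')\}_{i=1}^N$, rewrite $\mathcal{L}(h) = \mathbb{E}_{S'} \hat{\mathcal{L}}_{S'}(h)$, push the supremum inside via Jensen, and exploit the distributional symmetry of the differences $\ell(h(d_i), y_i) - \ell(h(d_i'), y_i')$ under independent sign flips to insert Rademacher variables. A triangle inequality on the resulting expression and splitting into its two halves yields the standard symmetrization bound $\mathbb{E}_S \Phi(S) \leq 2 \, \mathbb{E}_S \mathcal{R}_S(\ell \circ \mathcal{H})$.

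The last step is to pass from the expected to the empirical Rademacher complexity that appears in the statement: apply McDiarmid's inequality a second time, now to the map $S \mapsto \mathcal{R}_S(\ell \circ \mathcal{H})$, which again has bounded differences of order $C_\ell/N$ because Rademacher averages of uniformly bounded functions are stable under single-sample swaps. A union bound over the two concentration events, each taken at level $\delta/2$, combined with the symmetrization inequality and a careful accounting of constants (which is how the $\log(4/\delta)$ and the $4C_\ell$ prefactor arise from the two merged $\sqrt{2\log(2/\delta)/N}$ contributions), delivers the announced bound.

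The whole argument is routine and presents no real obstacle; the only care required is in verifying the bounded-differences constants and in combining the two McDiarmid events cleanly through the union bound. Importantly, the theorem is entirely abstract in $\mathcal{H}$ and $\ell$: none of the implicit-network structure (contractiveness of $T_\psi$, the Lipschitz constants $L_x$ and $L_\psi$, or the covering-number / Dudley machinery) is used here. That architecture-dependent work will be needed later to control $\mathcal{R}_S(\ell \circ \mathcal{H})$ for the concrete hypothesis class $\mathcal{H}$ defined in~\eqref{eq: calH_def} and thereby turn this abstract result into the paper's final generalization bound.
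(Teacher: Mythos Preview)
The paper does not supply its own proof of this theorem: it is quoted verbatim as Theorem~26.5 of \cite{shalev2014understanding} and used as an off-the-shelf tool, with the architecture-specific work deferred to Lemmas~\ref{lemma: bound_rademacher} and~\ref{lemma: bound_covering_number} (exactly as you note at the end of your proposal). Your outline---McDiarmid on the uniform deviation, ghost-sample symmetrization to reach the expected Rademacher complexity, a second McDiarmid application to pass to the empirical version, and a union bound over the two events---is the standard textbook argument behind the cited result, and it is correct; there is nothing further to compare.
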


\begin{remark}
    Dudley's inequality will allow us to bound the Rademacher complexity using a covering number argument.
\end{remark}

\section{Generalization of Implicit Networks}
We state a series of assumptions and lemmas necessary for our result. For ease of presentation, all proofs are provided in the appendix.

\begin{assumption}
    The distribution of the training data $\mathbb{P}_{\text{true}}$ has compact support. We denote these sets by $\mathcal{D} \times \mathcal{Y}$ with $\mathcal{D} \subset \mathbb{R}^m, \mathcal{Y} \subset \mathbb{R}^n$.
    \label{assumption: compact_support}
\end{assumption}

\begin{assumption}
    The loss function $\ell \colon \mathbb{R}^{n} \times \mathbb{R}^n \to \mathbb{R}$ is Lipschitz with constant $L_\ell$.
    \label{assumption: lipschitz_loss}
\end{assumption}

\begin{assumption}[Bounded Network Parameters and Fixed Points]
    \label{assumption: bounded_inputs_outputs}
    \textit{
    The set of parameters and fixed points are bounded. That is,
    \begin{equation}
        \begin{split}
        &\theta \in \Theta=\Phi\times\Psi \; \text{ where }\\&\Phi=\{ \phi \in \mathbb{R}^{p_\Phi} \; : \; \|\phi\| \leq C_{\rm{params},\Phi} \}
        \\
        &\text{and }\Psi=\{ \psi \in \mathbb{R}^{p_\Psi} \; : \; \|\psi\| \leq C_{\rm{params},\Psi}\}\\
        &\text{and }\|x^\star_{\psi,d}\| \leq C_{{\rm out},T}
        \\
        &\text{and } \|P_\phi(x^\star_{\psi,d})\| \leq C_{\rm{out}}\text{ for all }\theta =(\phi,\psi) \in \Theta, d\in\mathcal{D}
        \end{split}
        \end{equation}
        for  $C_{\rm{params},\Phi},C_{\rm{params},\Psi}, C_{{\rm out},T}, C_{\rm{out}} > 0$.}
\end{assumption}

\begin{assumption}[Lipschitz Properties of Fixed Point Operator]
    The operator $\Psi\times\mathbb{R}^n\times \mathcal{D}\to\mathbb{R}^n, (\psi,x,d)\mapsto T_{\psi}(x;d)$
    is contractive in $x$ with constant $L_x \in (0,1)$. Moreover, define $\mathcal{X}^\star = \{x_{\psi,d}^\star : d \in \mathcal{D}, \psi\in\Psi \} $ to be the set of all possible fixed points.
    The operator
    $\Psi\times\mathcal{X}^\star\times \mathcal{D}\to\mathbb{R}^n, (\psi,x,d)\mapsto T_{\psi}(x;d)$ is Lipschitz with respect to $\psi$ whenever $ x \in \mathcal{X}^\star$ with constant $L_{\psi}$%
    . The mapping $\Phi\times \overline{B^{\mathbb{R}^k}_{C_{{\rm out},T}}(0)}\to\mathbb{R}^n:(\phi,x)\mapsto P_\phi(x)$ is Lipschitz in $\phi$ with constant $L_{P,\phi}$ and Lipschitz in $x$ with constant $L_{P,x}$. Here, $B_r^X(x):=\{y\in X:\|x-y\|_X<r\}$ is the ball with radius $r>0$ centered at $x\in X$ for a normed vector space $X$.
    \label{assumption: T_lipschitz_contractive}
\end{assumption}

We note these assumptions can often be enforced in practice.
For example, contractivity of $T_{\psi}(x;d)$ with respect to $x$ can be obtained using, e.g., spectral normalization~\cite{miyato2018spectral}. Moreover, one may enforce the output of the network, $x^\star_{\psi,d}$, by composing it with a projection onto the ball of radius $C_{\rm{out}}$.

Next, we state some results that must be proven in order to obtain our generalization bound. In particular, the core idea is to bound the Rademacher complexity and plug this bound into Theorem~\ref{thm: expected_loss_error}.

\begin{figure*}[t]
    \centering
    \begin{tabular}{ccc}
        \multicolumn{3}{c}{Ellipse Dataset}
        \\
        \hline
        Contractive Layer & MON & Learned Gradient Descent
        \\
        \includegraphics[width=0.3\textwidth]{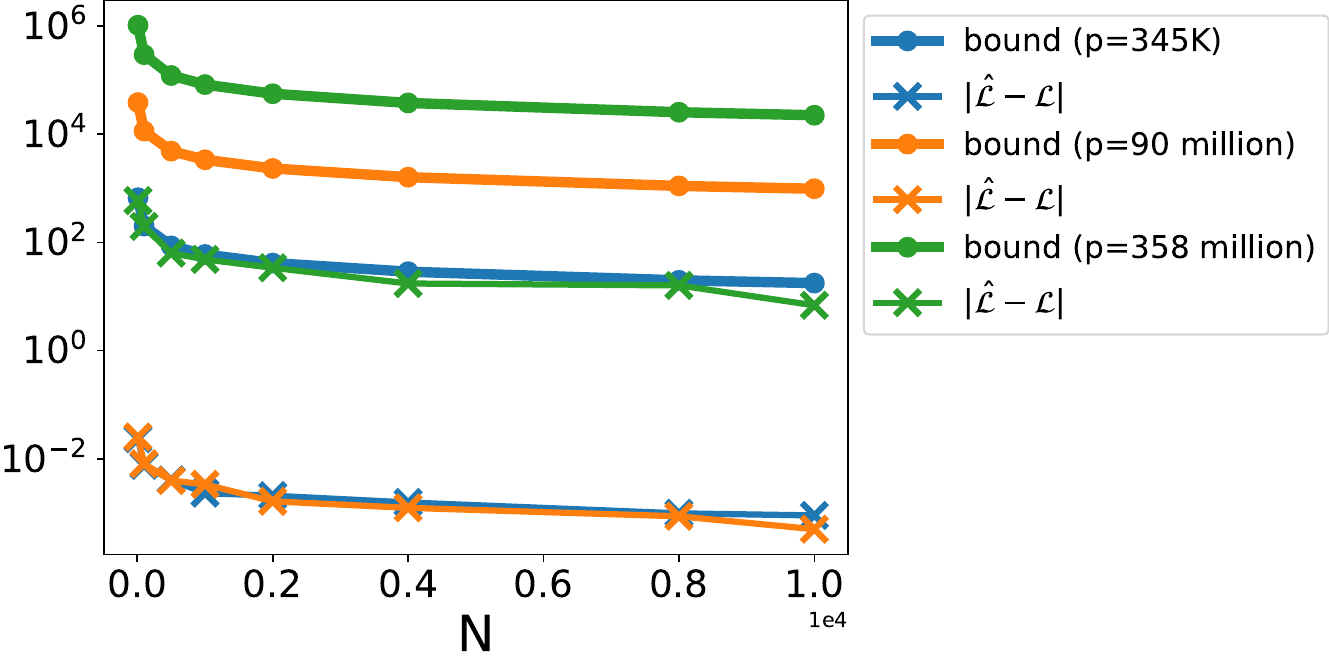}
        &
        \includegraphics[width=0.3\textwidth]{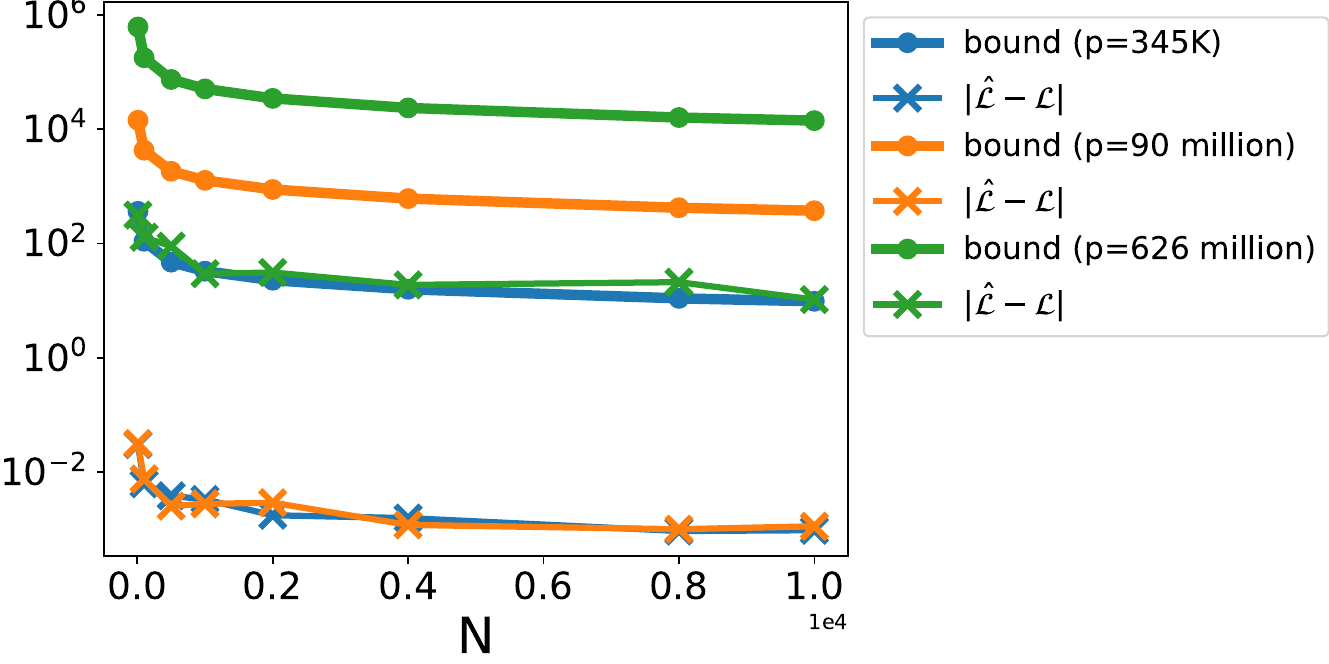}
        &
        \includegraphics[width=0.3\textwidth]{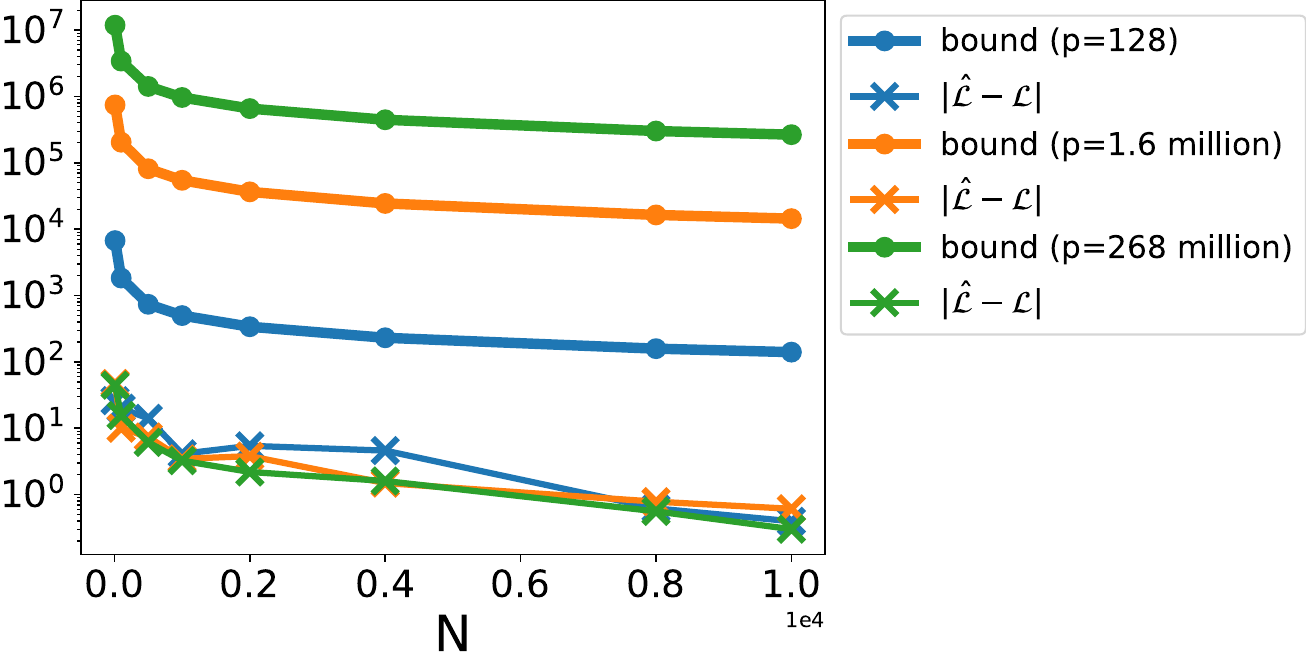}
        \\
        \multicolumn{3}{c}{LoDoPaB Dataset}
        \\
        \hline
        Contractive Layer & MON & Learned Gradient Descent
        \\
        \includegraphics[width=0.3\textwidth]{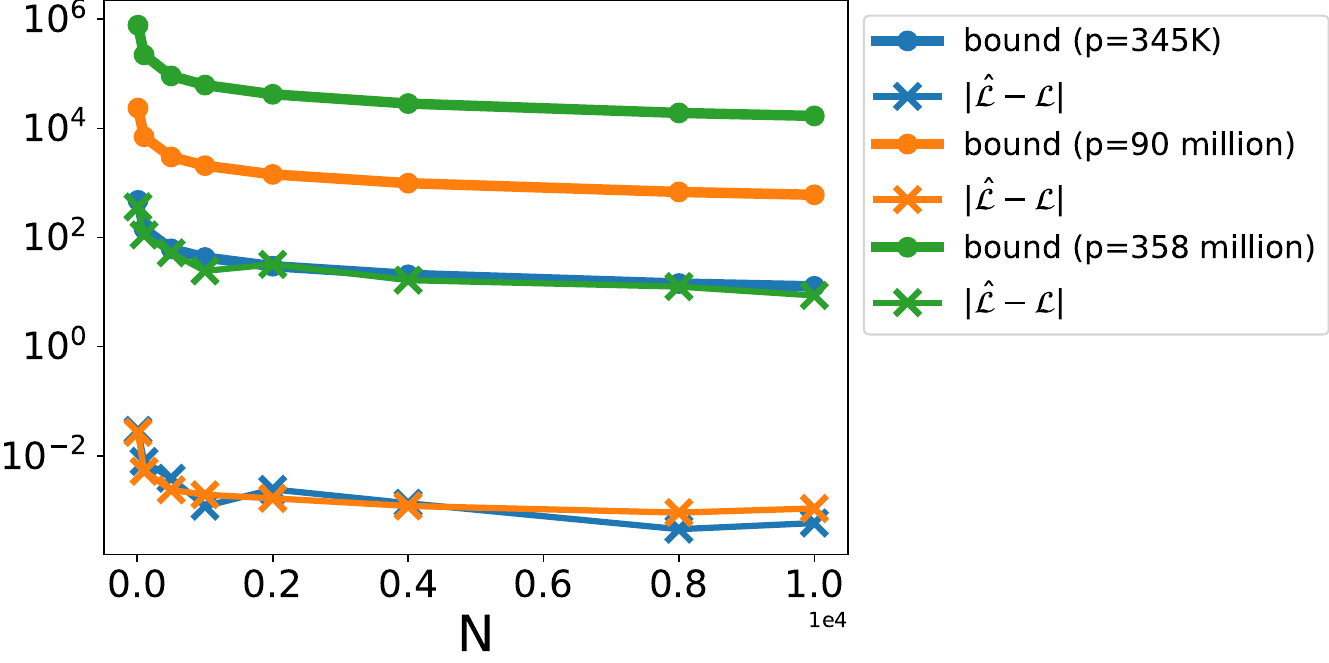}
        &
        \includegraphics[width=0.3\textwidth]{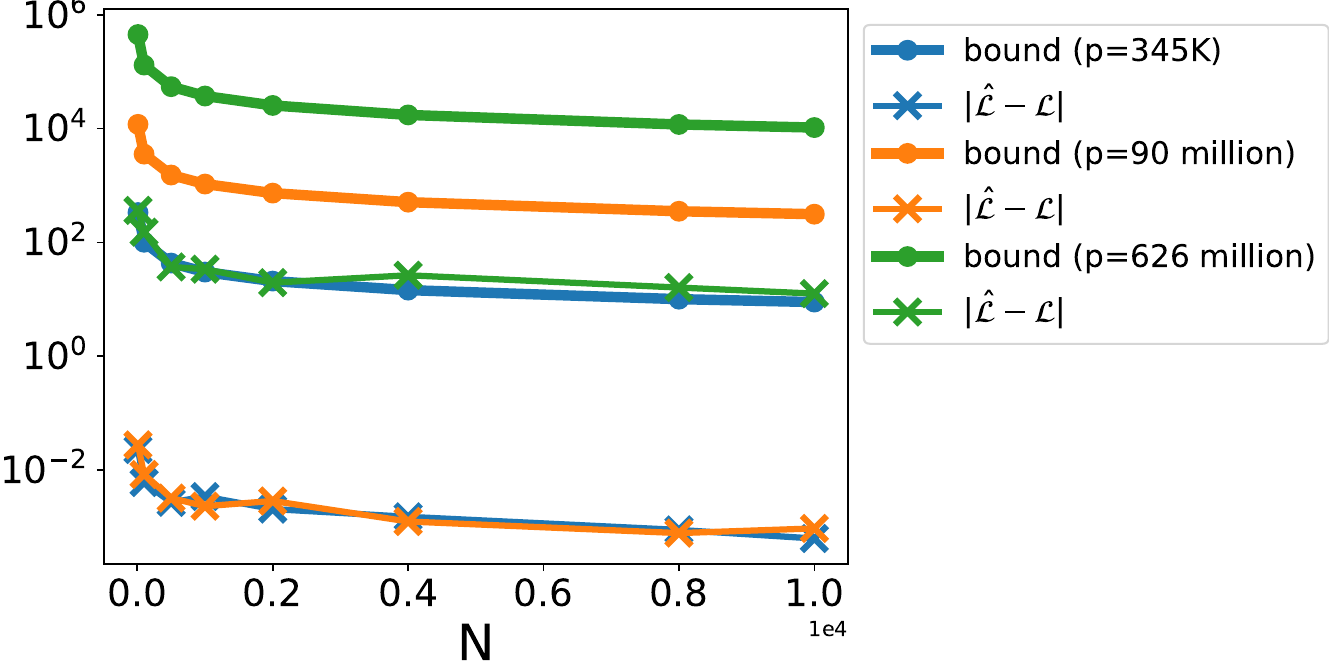}
        &
        \includegraphics[width=0.3\textwidth]{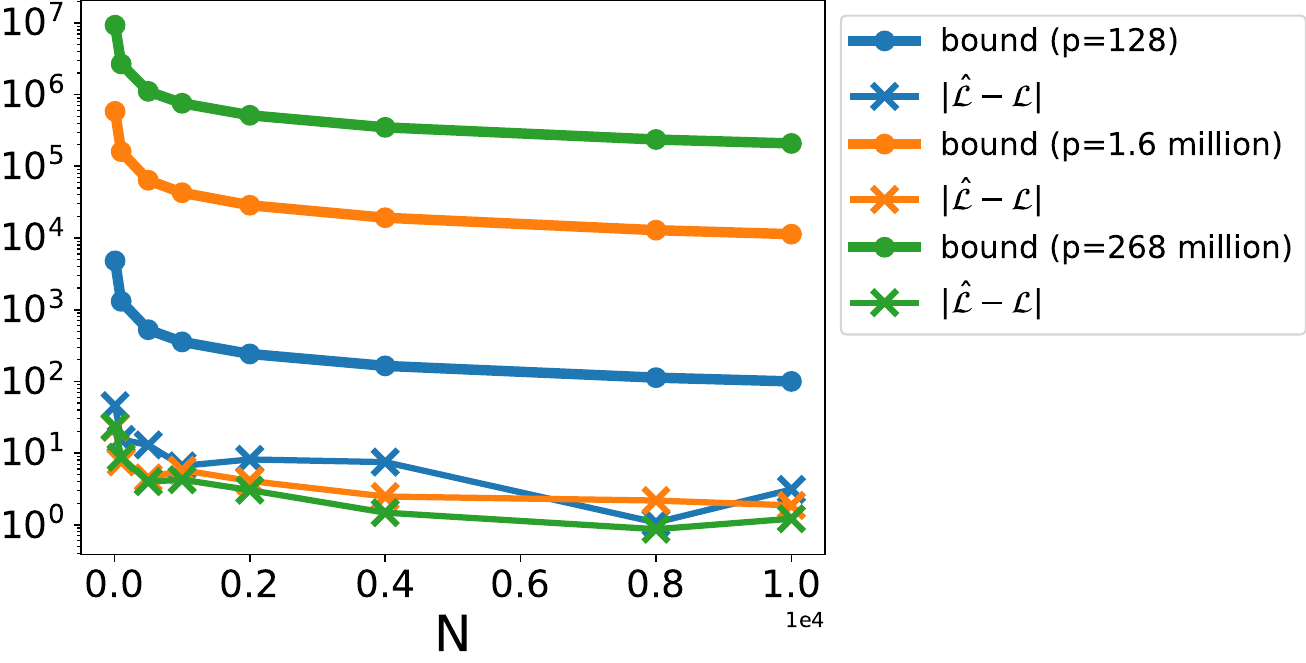}
    \end{tabular}
    \caption{\small Generalization bound (and errors) for different implicit architectures and for different number of parameters $p$. On the x-axis, we have the number of samples (note these are multiplied by $10^4$).}
    \label{fig: results}
\end{figure*}

\newcommand{\lemmaLipschitzFixedpoints}[1]{
Under Assumption~\ref{assumption: T_lipschitz_contractive}, the fixed point mapping $\Psi\times\mathcal{D}\to\mathbb{R}^n,(\psi,d)\mapsto x^\star_{\psi,d}$ is Lipschitz
with respect to $\theta$ with constant $L = \dfrac{L_{\psi}}{1 - L_x}$.
}

\begin{lemma}
\label{lemma: Lipschitz_fixed_points}
\lemmaLipschitzFixedpoints{main}
\end{lemma}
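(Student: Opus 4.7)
The plan is to use the standard contraction-perturbation argument for fixed points. Fix $\psi_1,\psi_2\in\Psi$ and $d\in\mathcal{D}$, and write $x_i := x^\star_{\psi_i,d}$ for $i=1,2$. Since each $x_i$ is by definition a fixed point, $x_i = T_{\psi_i}(x_i;d)$, so the starting identity is
\begin{equation*}
\|x_1 - x_2\| = \|T_{\psi_1}(x_1;d) - T_{\psi_2}(x_2;d)\|.
\end{equation*}

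Next I would insert the hybrid term $T_{\psi_1}(x_2;d)$ and apply the triangle inequality, yielding
\begin{equation*}
\|x_1 - x_2\| \le \|T_{\psi_1}(x_1;d) - T_{\psi_1}(x_2;d)\| + \|T_{\psi_1}(x_2;d) - T_{\psi_2}(x_2;d)\|.
\end{equation*}
The first term is controlled by the contractivity in $x$ from Assumption~\ref{assumption: T_lipschitz_contractive}, giving at most $L_x\|x_1-x_2\|$. For the second term, I need to invoke the Lipschitz property of $T$ in $\psi$, which, as stated in Assumption~\ref{assumption: T_lipschitz_contractive}, only holds for $x\in\mathcal{X}^\star$. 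The key point is that by choosing to insert $T_{\psi_1}(x_2;d)$ (rather than $T_{\psi_2}(x_1;d)$), the base point evaluated is $x_2 = x^\star_{\psi_2,d}\in\mathcal{X}^\star$, so the assumption applies and bounds the second term by $L_\psi\|\psi_1-\psi_2\|$.

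Combining these and rearranging, I obtain
\begin{equation*}
(1 - L_x)\|x_1 - x_2\| \le L_\psi \|\psi_1 - \psi_2\|,
\end{equation*}
and since $L_x\in(0,1)$, division gives the claimed Lipschitz constant $L = L_\psi/(1-L_x)$. Finally, because $x^\star_{\psi,d}$ does not depend on $\phi$, the same bound holds for the map as a function of $\theta=(\phi,\psi)$ (with the Euclidean norm on $\mathbb{R}^p$, since $\|\psi_1-\psi_2\|\le\|\theta_1-\theta_2\|$).

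The only real obstacle here is the bookkeeping around $\mathcal{X}^\star$: one must add and subtract $T_{\psi_1}(x_2;d)$ in exactly the right order so that the Lipschitz-in-$\psi$ assumption is evaluated at a fixed point, since that assumption is only posited on $\mathcal{X}^\star$ and not on all of $\mathbb{R}^n$. Otherwise the argument is a routine application of the Banach-style perturbation estimate.
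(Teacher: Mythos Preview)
Your proof is correct and uses the same decomposition as the paper: insert the hybrid term $T_{\psi_1}(x_2;d)$, apply contractivity in $x$ and Lipschitzness in $\psi$ (at the fixed point $x_2\in\mathcal{X}^\star$). The only difference is the finish: you rearrange the resulting inequality $(1-L_x)\|x_1-x_2\|\le L_\psi\|\psi_1-\psi_2\|$ directly, whereas the paper iterates the inequality $N$ times and sums the geometric series $\sum_{k=0}^\infty L_x^k = \tfrac{1}{1-L_x}$; your route is slightly more elementary but otherwise equivalent.
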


\newcommand{\lemmaLipschitzNetwork}[1]{
Under Assumption~\ref{assumption: T_lipschitz_contractive}, the mapping $\Theta\times\mathcal{D}\to\mathbb{R}^n,((\phi,\psi),d)\mapsto P_\phi(x^\star_{\psi,d})$ is Lipschitz
with respect to $\theta=(\phi,\psi)$ with constant $\hat L=\sqrt{(L_{P,x}L)^2+L_{P,\phi}^2}$.}

\begin{lemma}
\label{lemma: Lipschitz_network}
\lemmaLipschitzNetwork{main}
\end{lemma}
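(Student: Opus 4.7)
The plan is to bound the difference $\|P_{\phi_1}(x^\star_{\psi_1,d}) - P_{\phi_2}(x^\star_{\psi_2,d})\|$ for arbitrary parameters $\theta_i=(\phi_i,\psi_i)\in\Theta$ and $d\in\mathcal{D}$ by decomposing it via the triangle inequality into a term that varies only $\psi$ (through the fixed point) and a term that varies only $\phi$:
\begin{equation*}
\|P_{\phi_1}(x^\star_{\psi_1,d}) - P_{\phi_2}(x^\star_{\psi_2,d})\| \leq \|P_{\phi_1}(x^\star_{\psi_1,d}) - P_{\phi_1}(x^\star_{\psi_2,d})\| + \|P_{\phi_1}(x^\star_{\psi_2,d}) - P_{\phi_2}(x^\star_{\psi_2,d})\|.
\end{equation*}
By Assumption~\ref{assumption: bounded_inputs_outputs}, both $x^\star_{\psi_1,d}$ and $x^\star_{\psi_2,d}$ lie in $\overline{B_{C_{{\rm out},T}}^{\mathbb{R}^k}(0)}$, so the Lipschitz properties of $P_\phi$ stated in Assumption~\ref{assumption: T_lipschitz_contractive} apply at these points.

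Next, I would apply the three Lipschitz estimates in sequence. Using Lipschitzness of $P_{\phi_1}$ in $x$ and then Lemma~\ref{lemma: Lipschitz_fixed_points} gives
\begin{equation*}
\|P_{\phi_1}(x^\star_{\psi_1,d}) - P_{\phi_1}(x^\star_{\psi_2,d})\| \leq L_{P,x}\|x^\star_{\psi_1,d}-x^\star_{\psi_2,d}\| \leq L_{P,x}\, L\, \|\psi_1-\psi_2\|,
\end{equation*}
while Lipschitzness of $P$ in $\phi$ yields $\|P_{\phi_1}(x^\star_{\psi_2,d}) - P_{\phi_2}(x^\star_{\psi_2,d})\| \leq L_{P,\phi}\|\phi_1-\phi_2\|$. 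Summing produces
\begin{equation*}
\|P_{\phi_1}(x^\star_{\psi_1,d}) - P_{\phi_2}(x^\star_{\psi_2,d})\| \leq (L_{P,x}L)\|\psi_1-\psi_2\| + L_{P,\phi}\|\phi_1-\phi_2\|.
\end{equation*}

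Finally, I would convert the sum of two scaled norms into a single norm of $\theta_1-\theta_2$ by applying the Cauchy--Schwarz inequality to the vectors $(L_{P,x}L,\,L_{P,\phi})$ and $(\|\psi_1-\psi_2\|,\,\|\phi_1-\phi_2\|)$:
\begin{equation*}
(L_{P,x}L)\|\psi_1-\psi_2\| + L_{P,\phi}\|\phi_1-\phi_2\| \leq \sqrt{(L_{P,x}L)^2 + L_{P,\phi}^2}\,\sqrt{\|\psi_1-\psi_2\|^2+\|\phi_1-\phi_2\|^2} = \hat L\,\|\theta_1-\theta_2\|,
\end{equation*}
where the final equality uses the product norm $\|\theta\|^2=\|\phi\|^2+\|\psi\|^2$ on $\mathbb{R}^p=\mathbb{R}^{p_\Phi}\times\mathbb{R}^{p_\Psi}$. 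There is no serious obstacle here; the only point requiring care is confirming that the Lipschitz assumption on $P_\phi$ in $x$ is invoked only on the admissible ball, which is guaranteed by Assumption~\ref{assumption: bounded_inputs_outputs}. The argument gives Lipschitz continuity uniformly in $d\in\mathcal{D}$, which is exactly what is needed for the subsequent covering number estimates on $\mathcal{M}$.
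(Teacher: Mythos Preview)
Your proposal is correct and follows essentially the same argument as the paper: the same triangle-inequality splitting at the intermediate point $P_{\phi_1}(x^\star_{\psi_2,d})$, application of the Lipschitz bounds from Assumption~\ref{assumption: T_lipschitz_contractive} and Lemma~\ref{lemma: Lipschitz_fixed_points}, and the Cauchy--Schwarz step to combine the two terms into $\hat L\,\|\theta_1-\theta_2\|$. Your explicit remark that $x^\star_{\psi_i,d}\in\overline{B_{C_{{\rm out},T}}^{\mathbb{R}^k}(0)}$ before invoking the Lipschitz bound on $P_\phi$ is a nice clarification that the paper leaves implicit.
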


\newcommand{\lemmaBoundRademacher}[1]{
Under Assumptions~\ref{assumption: lipschitz_loss},~\ref{assumption: bounded_inputs_outputs}, and~\ref{assumption: T_lipschitz_contractive}, we have
    \begin{equation}
        \mathcal{R}_S(\ell \circ \mathcal{H}) \leq \frac{8 L_\ell}{N} \int_{0}^{\sqrt{N}C_{\rm{out}}/2} \sqrt{\log \mathcal{N}(\mathcal{M}, \|\cdot\|, r)} \: dr.
    \end{equation}
    where $L_\ell$ is the Lipschitz constant of $\ell$.
}
\begin{lemma}[Bound on Rademacher Complexity]
    \label{lemma: bound_rademacher}
    \lemmaBoundRademacher{main}
\end{lemma}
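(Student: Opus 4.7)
The plan is to realize $\mathcal{R}_S(\ell\circ\mathcal{H})$ as the expected supremum of a centered subgaussian process indexed by $\mathcal{H}$, invoke Dudley's inequality (Theorem~\ref{thm: dudley}), and translate the resulting covering-number integral from the intrinsic pseudometric on the index set into one over $\mathcal{M}$ in Euclidean (Frobenius) norm.

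First, fixing the sample $S$ so that only the Rademacher vector $\epsilon$ is random, I would define, for each $h\in\mathcal{H}$,
\[
Z_h \;=\; \frac{1}{N}\sum_{i=1}^{N}\epsilon_i\bigl(\ell(h(d_i),y_i)-\ell(0,y_i)\bigr).
\]
The term $\frac{1}{N}\sum_i \epsilon_i\ell(0,y_i)$ is independent of $h$ and has zero $\epsilon$-mean, hence $\mathbb{E}_\epsilon\sup_h Z_h=\mathcal{R}_S(\ell\circ\mathcal{H})$. Because the $\epsilon_i$ are independent Rademacher variables with $\mathbb{E}e^{\omega\epsilon_i a}\le e^{\omega^2 a^2/2}$, Assumption~\ref{assumption: lipschitz_loss} yields
\[
\mathbb{E}\exp\bigl(\omega(Z_h-Z_{h'})\bigr)\le\exp\Bigl(\frac{\omega^2 L_\ell^2}{2N^2}\sum_i\|h(d_i)-h'(d_i)\|^2\Bigr),
\]
so $(Z_h)$ is centered subgaussian with intrinsic pseudometric satisfying $d(h,h')\le\frac{L_\ell}{N}\|M(h)-M(h')\|$, where $M(h)\in\mathcal{M}$ is the matrix whose columns are the $h(d_i)$. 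A direct rescaling then gives $\mathcal{N}(\mathcal{H},d,r)\le\mathcal{N}(\mathcal{M},\|\cdot\|,Nr/L_\ell)$.

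Next I would bound the Dudley radius. Using the centering together with Assumption~\ref{assumption: bounded_inputs_outputs} on $\|h(d_i)\|$,
\[
\mathbb{E}Z_h^2=\frac{1}{N^2}\sum_i\bigl(\ell(h(d_i),y_i)-\ell(0,y_i)\bigr)^2\le\frac{L_\ell^2}{N^2}\sum_i\|h(d_i)\|^2\le\frac{L_\ell^2 C_{\rm out}^2}{N},
\]
so $\Delta(\mathcal{T})\le L_\ell C_{\rm out}/\sqrt{N}$. Inserting this and the covering-number bound into Theorem~\ref{thm: dudley} and substituting $r=L_\ell u/N$ in the integral produces
\[
\mathcal{R}_S(\ell\circ\mathcal{H})\le\frac{4\sqrt{2}\,L_\ell}{N}\int_0^{\sqrt{N}C_{\rm out}/2}\sqrt{\log\mathcal{N}(\mathcal{M},\|\cdot\|,u)}\,du,
\]
which yields the claimed inequality since $4\sqrt{2}<8$.

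The main technical obstacle I anticipate is the radius bound. The stated assumptions control $\|h(d)\|$ rather than $|\ell(h(d),y)|$, so applying Dudley to the raw process $\frac{1}{N}\sum_i\epsilon_i\ell(h(d_i),y_i)$ leaves an unabsorbable $\ell(0,y_i)$-type term inside $\sqrt{\mathbb{E}Z_h^2}$. Subtracting the $h$-independent offset $\frac{1}{N}\sum_i\epsilon_i\ell(0,y_i)$ preserves the expected supremum (it has zero mean) and lets Lipschitz continuity of $\ell$ convert directly into the desired $L_\ell C_{\rm out}/\sqrt{N}$ radius. The remaining pieces---verifying the subgaussian moment bound from independence of the $\epsilon_i$ and rescaling covering numbers via a change of variables---are standard manipulations once this reformulation is in place.
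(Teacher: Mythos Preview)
Your argument is correct and reaches the stated bound, but it proceeds differently from the paper. The paper first applies the vector-contraction inequality of \cite{maurer2016vector} to strip off the Lipschitz loss, producing a new Rademacher process indexed by $\mathcal{M}$ with a Rademacher \emph{matrix} $\tilde\epsilon\in\mathbb{R}^{N\times m}$; it then computes the $L^2$ pseudometric and radius of that linear process exactly and invokes Dudley. You instead keep the original scalar process, center it by subtracting the $h$-independent term $\frac{1}{N}\sum_i\epsilon_i\ell(0,y_i)$, observe that any Rademacher sum is automatically subgaussian with respect to its own $L^2$ pseudometric, and push the Lipschitz constant into the covering-number comparison $\mathcal{N}(\mathcal{H},d,r)\le\mathcal{N}(\mathcal{M},\|\cdot\|,Nr/L_\ell)$ before changing variables in the Dudley integral. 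Your route is more self-contained (it avoids the external contraction lemma) and in fact lands on the sharper constant $4\sqrt{2}$ rather than $8=\sqrt{2}\cdot4\sqrt{2}$; the paper's extra factor $\sqrt{2}$ is precisely the price of Maurer's inequality. Both approaches rely on Assumption~\ref{assumption: bounded_inputs_outputs} for the radius bound and on Assumption~\ref{assumption: lipschitz_loss} for the pseudometric comparison, while Assumption~\ref{assumption: T_lipschitz_contractive} is only needed so that the fixed points, and hence $\mathcal{M}$, are well defined.
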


\newcommand{\lemmaBoundCoveringNumber}[1]{
    Under Assumptions~\ref{assumption: T_lipschitz_contractive} and~\ref{assumption: bounded_inputs_outputs},
    \begin{equation}
        \mathcal{N}(\mathcal{M}, \| \cdot \|, r) \leq \left(1 + \dfrac{2 \hat L C_{\rm{params}}}{r} \right)^p,
    \end{equation}
    where $C_{\rm{params}}=\sqrt{C_{\rm{params},\Phi}^2+C_{\rm{params},\Psi}^2}$.
}

\begin{lemma}[Bound on Covering Number]
    \label{lemma: bound_covering_number}
    \lemmaBoundCoveringNumber{main}
\end{lemma}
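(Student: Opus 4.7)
The plan is to transfer a Euclidean covering of the parameter ball onto $\mathcal{M}$ via the Lipschitz estimate from Lemma~\ref{lemma: Lipschitz_network}, and then invert the relationship between the parameter scale $\epsilon$ and the output scale $r$.

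First, by Assumption~\ref{assumption: bounded_inputs_outputs} every $\theta=(\phi,\psi)\in\Theta$ satisfies $\|\theta\|^2=\|\phi\|^2+\|\psi\|^2\leq C_{\rm{params},\Phi}^2+C_{\rm{params},\Psi}^2=C_{\rm{params}}^2$, so $\Theta\subseteq\overline{B_{C_{\rm{params}}}(0)}\subseteq\mathbb{R}^p$. I would then invoke the standard volumetric covering bound for Euclidean balls,
$$\mathcal{N}\bigl(\overline{B_R(0)},\|\cdot\|,\epsilon\bigr)\leq\bigl(1+2R/\epsilon\bigr)^p,$$
valid for all $R,\epsilon>0$, applied with $R=C_{\rm{params}}$. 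This produces, for any prescribed $\epsilon>0$, an $\epsilon$-net $\{\theta_1,\ldots,\theta_K\}\subseteq\Theta$ of cardinality $K\leq(1+2C_{\rm{params}}/\epsilon)^p$.

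Next, I would push this net forward through the stacked-output map $\theta\mapsto M(\theta):=[P_\phi(x^\star_{\psi,d_i})]_{i=1}^N$ to obtain a cover of $\mathcal{M}$. By Lemma~\ref{lemma: Lipschitz_network}, the mapping $\theta\mapsto P_\phi(x^\star_{\psi,d})$ is $\hat L$-Lipschitz in $\theta$ for every fixed $d\in\mathcal{D}$, so for each $\theta\in\Theta$ the closest net point $\theta_k$ satisfies $\|M(\theta)-M(\theta_k)\|\leq\hat L\epsilon$ in the norm convention the statement puts on $\mathcal{M}$. Hence $\{M(\theta_1),\ldots,M(\theta_K)\}$ is an $\hat L\epsilon$-net for $\mathcal{M}$, and setting $\epsilon=r/\hat L$ yields the claimed bound
$$\mathcal{N}(\mathcal{M},\|\cdot\|,r)\leq\bigl(1+2\hat L C_{\rm{params}}/r\bigr)^p.$$

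The main point requiring care is matching the norm on $\mathcal{M}\subseteq\mathbb{R}^{n\times N}$ with the per-sample Lipschitz bound: summing $N$ squared per-sample Lipschitz estimates in the Frobenius norm would introduce an extra $\sqrt{N}$ factor, so for the stated constant $\hat L$ to pass through unchanged one has to read $\|\cdot\|$ on $\mathcal{M}$ as a normalized empirical norm (under which each $M(\theta)$ has size at most $C_{\rm{out}}$, consistent with the integrand radius in Lemma~\ref{lemma: bound_rademacher}). Apart from this bookkeeping, the proof is a routine application of the Lipschitz-map covering principle.
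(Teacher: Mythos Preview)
Your approach is exactly the paper's: cover $\Theta$ by the standard volumetric bound $(1+2C_{\rm params}/\epsilon)^p$, push the net forward through the Lipschitz map of Lemma~\ref{lemma: Lipschitz_network}, and invert $\epsilon=r/\hat L$.

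The $\sqrt{N}$ issue you flag is real, and the paper's own proof glosses over it in the same place: it applies the single-sample constant $\hat L$ from Lemma~\ref{lemma: Lipschitz_network} directly to the stacked map $\theta\mapsto[P_\phi(x^\star_{\psi,d_i})]_{i=1}^N$, whereas under the Frobenius norm actually used in Lemma~\ref{lemma: bound_rademacher} (note the radius there is $\sqrt{N}\,C_{\rm out}$, not $C_{\rm out}$) the Lipschitz constant of the stacked map is $\sqrt{N}\,\hat L$. Your proposed resolution via a normalized empirical norm is therefore in the wrong direction, since it would contradict the $\sqrt{N}\,C_{\rm out}/2$ integration limit in Lemma~\ref{lemma: bound_rademacher}. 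The clean fix is simply to state the covering bound with $\sqrt{N}\,\hat L$ in place of $\hat L$; when this is carried into the proof of Theorem~\ref{thm: main_theorem}, the extra $\sqrt{N}$ cancels against the $\sqrt{N}$ in the integration limit, so the log argument becomes $e\bigl(1+4\hat L C_{\rm params}/C_{\rm out}\bigr)$ and the $\mathcal{O}(\sqrt{p/N})$ rate is unaffected.
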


\newcommand{\lemmaBoundLoss}[1]{
    Under Assumptions~\ref{assumption: compact_support}, \ref{assumption: lipschitz_loss}, there exists a constant $C_{\ell}>0$ such that
        \begin{equation}
            |\ell(P_\phi(x^\star_{\psi,d}), y)| \leq C_{\ell} \quad \forall \; (d,y) \sim \mathbb{P}_{\text{true}}, \forall \theta =(\phi,\psi) \in \Theta.
        \end{equation}
}
\begin{lemma}[Bound on Loss Function]
    \label{lemma: bound_loss}
    \lemmaBoundLoss{main}
\end{lemma}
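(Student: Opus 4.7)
The plan is to combine the Lipschitz continuity of $\ell$ (Assumption~\ref{assumption: lipschitz_loss}) with the boundedness of both the data support (Assumption~\ref{assumption: compact_support}) and the network output (Assumption~\ref{assumption: bounded_inputs_outputs}) to produce a uniform bound. The key identity is the triangle inequality
\[
|\ell(P_\phi(x^\star_{\psi,d}), y)| \leq |\ell(0,0)| + |\ell(P_\phi(x^\star_{\psi,d}), y) - \ell(0,0)|,
\]
so that applying the $L_\ell$-Lipschitz property of $\ell$ to the second term reduces the problem to bounding the two arguments $P_\phi(x^\star_{\psi,d})$ and $y$ uniformly over $\theta \in \Theta$ and $(d,y) \in \mathcal{D} \times \mathcal{Y}$.

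First I would note that compactness of $\mathcal{Y}$ under Assumption~\ref{assumption: compact_support} implies the existence of $C_y > 0$ with $\|y\| \leq C_y$ for all $y \in \mathcal{Y}$. Next, Assumption~\ref{assumption: bounded_inputs_outputs} directly yields $\|P_\phi(x^\star_{\psi,d})\| \leq C_{\mathrm{out}}$ uniformly in $\theta \in \Theta$ and $d \in \mathcal{D}$. Combining these bounds with the Lipschitz estimate on $\mathbb{R}^n \times \mathbb{R}^n$ equipped with the Euclidean norm gives
\[
|\ell(P_\phi(x^\star_{\psi,d}), y)| \leq |\ell(0,0)| + L_\ell \sqrt{C_{\mathrm{out}}^2 + C_y^2},
\]
so the desired constant is $C_\ell := |\ell(0,0)| + L_\ell \sqrt{C_{\mathrm{out}}^2 + C_y^2}$, which is finite since $\ell$ is real-valued and Lipschitz.

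There is no serious obstacle here; the argument is a routine use of Lipschitz continuity on a bounded set, and the most delicate step is simply verifying that the chosen reference point $(0,0)$ together with the Euclidean product norm gives the clean square-root expression above. The only bookkeeping point worth flagging is that the lemma as stated cites only Assumptions~\ref{assumption: compact_support} and~\ref{assumption: lipschitz_loss}, whereas the bound on $P_\phi(x^\star_{\psi,d})$ relies on Assumption~\ref{assumption: bounded_inputs_outputs}; I would either amend the hypothesis list accordingly, or alternatively derive the required output boundedness from the Lipschitz property of $P_\phi$ in Assumption~\ref{assumption: T_lipschitz_contractive} applied to the bounded set $\mathcal{X}^\star \subset \overline{B^{\mathbb{R}^k}_{C_{\mathrm{out},T}}(0)}$, which is already contained in Assumption~\ref{assumption: bounded_inputs_outputs}.
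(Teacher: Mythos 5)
Your proof is correct and follows the same essential route as the paper's, namely combining Lipschitz continuity of $\ell$ with boundedness of its two arguments; you simply spell out what the paper leaves implicit. The paper's proof is a one-liner ("since $\mathcal{D}$ is compact and $\ell$ is continuous, $\ell$ must be bounded"), which is actually somewhat imprecise: compactness of $\mathcal{D}$ alone does not bound the first argument $P_\phi(x^\star_{\psi,d})$, and the argument really needs both compactness of $\mathcal{Y}$ and the uniform output bound $\|P_\phi(x^\star_{\psi,d})\|\leq C_{\rm out}$ from Assumption~\ref{assumption: bounded_inputs_outputs}. Your explicit decomposition via $\ell(0,0)$ and the Euclidean product norm, yielding $C_\ell = |\ell(0,0)| + L_\ell\sqrt{C_{\rm out}^2 + C_y^2}$, makes this precise. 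Your bookkeeping remark is also well taken: as stated, the lemma cites only Assumptions~\ref{assumption: compact_support} and~\ref{assumption: lipschitz_loss}, but either Assumption~\ref{assumption: bounded_inputs_outputs} (directly) or Assumption~\ref{assumption: T_lipschitz_contractive} (via Lipschitzness of $P_\phi$ on the bounded set $\mathcal{X}^\star$) is needed to control the network output, and the hypothesis list should reflect that.
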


\newcommand{\thmGeneralizationBound}[1]{
    Under Assumptions~\ref{assumption: compact_support}--\ref{assumption: T_lipschitz_contractive}, for any $\theta \in \Theta$ and $\delta \in (0,1)$, we have
        \begin{equation}
        \begin{split}
            \mathcal{L}(\theta) &\leq \hat{\mathcal{L}}(\theta)
            \\
            &+ 8L_\ell C_{\rm{out}} \sqrt{\frac{p}{N} \cdot \log \left( e \cdot \left( 1 + \dfrac{4 \hat L C_{\rm{params}}}{\sqrt{N} C_{\rm{out}}} \right)\right)}
            \\
            &+ 4C_{\ell} \sqrt{\dfrac{2 \log(4/\delta)}{N}}
        \end{split}
        \end{equation}
        with probability $1-\delta$.
}

\noindent We now state our main result.
\begin{theorem}[Generalization Bound]
    \label{thm: main_theorem}
    \thmGeneralizationBound{main}
\end{theorem}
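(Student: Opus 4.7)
The plan is to simply chain together the previously stated lemmas, with the only nontrivial piece being the evaluation of a Dudley-type integral at the end. I will carry out the steps in the natural order dictated by the tool dependencies.

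\textbf{Step 1 (bounded loss + Rademacher bound).} First I apply the bounded-loss lemma (Lemma~\ref{lemma: bound_loss}), whose hypotheses (compactness of $\mathcal D\times\mathcal Y$, Lipschitzness of $\ell$, and boundedness of $\theta$, hence of $P_\phi(x^\star_{\psi,d})$) are exactly Assumptions~\ref{assumption: compact_support}--\ref{assumption: T_lipschitz_contractive}. This produces the uniform bound $|\ell(P_\phi(x^\star_{\psi,d}),y)|\leq C_\ell$ required to invoke Theorem~\ref{thm: expected_loss_error}, yielding, with probability at least $1-\delta$,
\begin{equation*}
\mathcal L(\theta)\leq \hat{\mathcal L}(\theta)+2\mathcal R_S(\ell\circ\mathcal H)+4C_\ell\sqrt{\tfrac{2\log(4/\delta)}{N}}.
\end{equation*}
This already produces the third term of the theorem verbatim.

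\textbf{Step 2 (plug the covering number into Dudley).} Next I insert the Rademacher bound of Lemma~\ref{lemma: bound_rademacher} and the covering number bound of Lemma~\ref{lemma: bound_covering_number}. Writing $A:=2\hat L C_{\rm params}$ and $B:=\sqrt N C_{\rm out}/2$ for brevity, this gives
\begin{equation*}
\mathcal R_S(\ell\circ\mathcal H)\leq \frac{8L_\ell\sqrt p}{N}\int_0^{B}\sqrt{\log\bigl(1+A/r\bigr)}\,dr.
\end{equation*}

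\textbf{Step 3 (evaluate the integral, the one nontrivial step).} This is the only analytic step. I apply Jensen's inequality to the concave function $\sqrt{\,\cdot\,}$ to obtain
\begin{equation*}
\int_0^B\sqrt{\log(1+A/r)}\,dr\leq \sqrt{B\int_0^B\log(1+A/r)\,dr}.
\end{equation*}
A standard computation (integration by parts, or the substitution $u=A/r$) yields $\int_0^B\log(1+A/r)\,dr=B\log(1+A/B)+A\log(1+B/A)$. Using the elementary inequality $x\log(1+1/x)\leq 1$ for all $x>0$ on the term $(A/B)\log(1+B/A)$, I get
\begin{equation*}
\int_0^B\sqrt{\log(1+A/r)}\,dr\leq B\sqrt{\log(1+A/B)+1}=B\sqrt{\log\!\bigl(e(1+A/B)\bigr)}.
\end{equation*}

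\textbf{Step 4 (assemble).} Substituting $A=2\hat L C_{\rm params}$ and $B=\sqrt N C_{\rm out}/2$, and doubling as required by Theorem~\ref{thm: expected_loss_error}, gives
\begin{equation*}
2\mathcal R_S(\ell\circ\mathcal H)\leq 8L_\ell C_{\rm out}\sqrt{\frac{p}{N}\log\!\left(e\Bigl(1+\tfrac{4\hat L C_{\rm params}}{\sqrt N C_{\rm out}}\Bigr)\right)},
\end{equation*}
which is the middle term of the theorem. Adding the three contributions finishes the argument.

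The only real obstacle is the integral in Step~3; the rest is bookkeeping. The trick worth highlighting is combining Jensen with the crude bound $x\log(1+1/x)\leq 1$ to turn the $A\log(1+B/A)$ piece into the additive constant $+1$ inside the logarithm, which is what lets the final expression be written compactly as $\log(e(1+A/B))$.
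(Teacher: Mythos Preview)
Your proof is correct and follows essentially the same route as the paper: apply Theorem~\ref{thm: expected_loss_error} (using Lemma~\ref{lemma: bound_loss} for the boundedness hypothesis), insert Lemmas~\ref{lemma: bound_rademacher} and~\ref{lemma: bound_covering_number} to reduce to the Dudley integral, bound that integral, and assemble. The only difference is in Step~3: the paper obtains the estimate $\int_0^B\sqrt{\log(1+A/r)}\,dr\le B\sqrt{\log(e(1+A/B))}$ by a substitution and a citation to \cite[Lemma~C.9]{foucart2013mathematical}, whereas you derive it from scratch via Jensen plus $x\log(1+1/x)\le 1$; your argument is a clean, self-contained proof of exactly the lemma the paper cites, so the two proofs are effectively identical.
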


A few remarks are in order. First, we follow the generalization bound proof technique from~\cite{behboodi2022compressive}; however, the resulting bound \emph{does not} depend on the ``depth'' of the network as in~\cite{behboodi2022compressive}.
The generalization error is upper bounded by a noise induced term and a complexity term which grows with the square root of the model’s complexity measured by the number of parameters $p$.
Similar to most related works stated in Section~\ref{sec: related_works}, this uses a Lipschitz-based argument and the result is obtained by upper-bounding the Rademacher complexity. Our proof assumes Lipschitzness of the operator $T_{\psi}$ as well as boundedness and a Lipschitz condition on the loss function $\ell$.
We also note that the log term approaches 1 as $N \to \infty$, thus, the bound is $\mathcal{O}(1/\sqrt{N})$.
Finally, the generalization bound in Theorem~\ref{thm: main_theorem} relies boundedness of the network parameters and outputs. This assumption should be understood as a theoretical condition on the admissible hypothesis class rather than a claim about the implicit behavior of unconstrained optimization algorithms. But we emphasize that the admissible parameter radius need not be small. The theoretical results hold for \emph{any} finite bound, and in practice the radius can be chosen sufficiently large so that the constraint is effectively inactive over the course of training.
In this sense, boundedness serves as a technical device required for the covering-number argument rather than a restrictive modeling assumption.

\section{Example Architectures}
We prove Lipschitzness of some standard implicit architectures.
Since the set of training data has compact support, we denote the maximum norm of samples $d$ to be $C_d$. That is, $\| d \| \leq C_d$ for all $d$ in $\mathcal{D}$.

If $k=n$, i.e., the output of the fixed point iteration has the same dimension as the output data, the layer $P_\phi$ is usually not necessary, so it can be chosen as the identity, i.e., $P_\phi(x)=x$. In this case, we get $\Phi=\emptyset$, $p_\Phi=0$, $C_{\rm{params},\Phi}=0$, $L_{P,x}=1$, $L_{P,\phi}=0$ and thus $\hat L=\sqrt{((1\cdot L)^2+0^2)}=L$.

If $k\neq n$, $P_\phi$ is needed to convert the dimension of the fixed point to the one of the output data. Here, it is often sufficient to chose $P_\phi$ as a linear layer, i.e., $\phi=A\in\mathbb{R}^{n\times k}$ and $P_A(x)=Ax$. In this case, $L_{P,x}=C_{\rm{params},\Phi}$ and $L_{P,\phi}=C_{{\rm out},T}$.

\subsection{Single-Layer Contractive Implicit Network}
Perhaps the most standard architecture is a contractive implicit network with fixed point operator of the form
\begin{equation}
    T_{\psi}(x;d) = \sigma(Wx + Ud + b),
    \label{eq: vanilla_T}
\end{equation}
where contractivity w.r.t. $x$ can be guaranteed by using, e.g., spectral normalization~\cite{miyato2018spectral} so that $\|W\| < 1$ and $\sigma\colon \mathbb{R} \to \mathbb{R}$ is a 1-Lipschitz non-linear activation function, e.g., ReLU. Here, the parameters to be learned are $\psi = (W, U, b)$.

\newcommand{\lemmaVanillaTLipschitz}[1]{
        Let $T_{\psi}$ be given by~\eqref{eq: vanilla_T}. Under Assumption~\ref{assumption: bounded_inputs_outputs}, $T_{\psi}$ satisfies Assumption~\ref{assumption: T_lipschitz_contractive} with $L_x = \|W\|$ and $L_{\psi} = \sqrt{C_\text{out}^2 + C_d^2 + 1}$.
}

\begin{lemma}[Lipschitz Property of Contractive Networks]
    \label{lemma: vanilla_T_Lipschitz}
    \lemmaVanillaTLipschitz{main}
\end{lemma}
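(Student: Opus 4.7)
The plan is to handle the two claims separately by direct computation, exploiting that $\sigma$ is $1$-Lipschitz and that $\|x\|\leq C_{\text{out}}$ for $x\in\mathcal{X}^\star$ and $\|d\|\leq C_d$ for $d\in\mathcal{D}$.

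For the contractivity constant $L_x$, I would fix $\psi=(W,U,b)$ and $d\in\mathcal{D}$ and estimate
\[
    \|T_{\psi}(x_1;d)-T_{\psi}(x_2;d)\|=\|\sigma(Wx_1+Ud+b)-\sigma(Wx_2+Ud+b)\|\leq \|W(x_1-x_2)\|\leq\|W\|\,\|x_1-x_2\|,
\]
using that $\sigma$ is $1$-Lipschitz (applied componentwise, so the Euclidean estimate is preserved). Since $\|W\|<1$ by the spectral normalization assumption, this yields the contractivity with $L_x=\|W\|$.

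For Lipschitzness in $\psi$, I would fix $x\in\mathcal{X}^\star$ and $d\in\mathcal{D}$ and write, for $\psi_i=(W_i,U_i,b_i)$, $i=1,2$,
\[
    \|T_{\psi_1}(x;d)-T_{\psi_2}(x;d)\|\leq \|(W_1-W_2)x+(U_1-U_2)d+(b_1-b_2)\|\leq C_{\text{out}}\|W_1-W_2\|+C_d\|U_1-U_2\|+\|b_1-b_2\|,
\]
where I used the $1$-Lipschitzness of $\sigma$, the triangle inequality, submultiplicativity of the operator norm, and the a priori bounds on $\|x\|$ and $\|d\|$. Applying Cauchy--Schwarz to the resulting inner product of $(C_{\text{out}},C_d,1)$ with $(\|W_1-W_2\|,\|U_1-U_2\|,\|b_1-b_2\|)$ and using $\|W_1-W_2\|\leq\|W_1-W_2\|_F$ (likewise for $U$), the bound becomes
\[
    \sqrt{C_{\text{out}}^2+C_d^2+1}\cdot\sqrt{\|W_1-W_2\|_F^2+\|U_1-U_2\|_F^2+\|b_1-b_2\|^2}=\sqrt{C_{\text{out}}^2+C_d^2+1}\,\|\psi_1-\psi_2\|,
\]
identifying $\|\psi\|$ with the Euclidean norm of the concatenation of the entries of $W$, $U$, and $b$. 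This yields $L_\psi=\sqrt{C_{\text{out}}^2+C_d^2+1}$.

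The only subtle point, and the place where I would be most careful, is the discrepancy between the operator norm $\|W\|$ used in the action on vectors and the Frobenius norm implicit in the parameter norm $\|\psi\|$ appearing in Assumption~\ref{assumption: bounded_inputs_outputs}; the elementary bound $\|\cdot\|\leq\|\cdot\|_F$ closes that gap without cost. Everything else is routine triangle-inequality and Cauchy--Schwarz bookkeeping, and no iteration or fixed-point reasoning is needed here since Assumption~\ref{assumption: T_lipschitz_contractive} only asks for Lipschitzness of $T_\psi$ itself on $\mathcal{X}^\star\times\mathcal{D}$, not of the fixed-point map.
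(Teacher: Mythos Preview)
Your proof is correct and follows essentially the same route as the paper's: $1$-Lipschitzness of $\sigma$ for the contractivity, then triangle inequality plus the a priori bounds $\|x\|\leq C_{\text{out}}$, $\|d\|\leq C_d$ and Cauchy--Schwarz for the $\psi$-Lipschitz estimate. Your explicit remark about passing from the operator norm to the Frobenius norm via $\|\cdot\|\leq\|\cdot\|_F$ is a point the paper leaves implicit, so your write-up is in fact slightly more careful on that step.
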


\subsection{Monotone Equilibrium Network}
We begin with a MON architecture given by the forward-backward iteration described in Section~\ref{sec: related_works}~\cite{pabbaraju2020estimating}:
\begin{equation}
    T_{\psi}(x;d) = \sigma( (I - \alpha(I-W))x + \alpha(Ud + b)),
    \label{eq: mon_iteration_in_W}
\end{equation}
where $\sigma \colon \mathbb{R} \to \mathbb{R}$ is a 1-Lipschitz non-linearity. To ensure $T$ is contractive, one must ensure $I-W$ is strongly monotone; this is achieved by parameterizing $W = (1-m)I - A^\top A + B - B^\top$
for some matrices $A, B$
, and $0 \leq \alpha \leq 2m/\|I-W\|^2$ where $m$ is the strong monotonicity parameter.
Here, the parameters to be learned are $\psi = (A, B, U, b)$.

\newcommand{\lemmaMONLipschitzConstants}[1]{
        Let $T_{\psi}$ be given by~\eqref{eq: mon_iteration_in_W}. Under Assumption~\ref{assumption: bounded_inputs_outputs}, $T_{\psi}$ satisfies Assumption~\ref{assumption: T_lipschitz_contractive} with $L_{\psi} = \alpha \sqrt{4(C_{\text{params}}^2+1)C_{\text{out}}^2+ C_d^2+1}$ and $L_x < 1$.
}

\begin{lemma}[Lipschitz Property of MONs]
    \label{lemma: MON_Lipschitz}
    \lemmaMONLipschitzConstants{main}
\end{lemma}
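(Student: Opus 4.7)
The plan is to verify the two parts of Assumption~\ref{assumption: T_lipschitz_contractive} for the MON iteration~\eqref{eq: mon_iteration_in_W}: contractivity of $T_\psi(\cdot\,;d)$ in $x$ with constant $L_x<1$, and Lipschitzness in $\psi$ on $\mathcal{X}^\star\times\mathcal{D}$ with the announced constant. Since the activation $\sigma$ is coordinatewise $1$-Lipschitz, the induced map $\mathbb{R}^n\to\mathbb{R}^n$ is $1$-Lipschitz in the Euclidean norm, so both estimates reduce to controlling the affine expression inside $\sigma$.

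For the contractivity part, set $M:=I-W=mI+A^\top A+(B^\top-B)$. The symmetric part of $M$ is $mI+A^\top A\succeq mI$ and the skew part contributes nothing to $\langle Mv,v\rangle$, hence $\langle Mv,v\rangle\ge m\|v\|^2$. Expanding
\begin{equation*}
\|(I-\alpha M)v\|^2=\|v\|^2-2\alpha\langle Mv,v\rangle+\alpha^2\|Mv\|^2\le (1-2\alpha m+\alpha^2\|M\|^2)\|v\|^2,
\end{equation*}
and invoking the hypothesis $0<\alpha\le 2m/\|I-W\|^2$ gives $\|I-\alpha M\|^2\le 1-\alpha(2m-\alpha\|M\|^2)<1$. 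Since $\sigma$ is $1$-Lipschitz, this bound transfers to $T_\psi(\cdot\,;d)$, yielding $L_x=\|I-\alpha M\|<1$.

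For the Lipschitz constant in $\psi=(A,B,U,b)$, $1$-Lipschitzness of $\sigma$ produces
\begin{equation*}
\|T_{\psi_1}(x;d)-T_{\psi_2}(x;d)\|\le\alpha\bigl\|(W_1-W_2)x+(U_1-U_2)d+(b_1-b_2)\bigr\|.
\end{equation*}
Writing $W_1-W_2=-(A_1^\top A_1-A_2^\top A_2)+(B_1-B_2)-(B_1-B_2)^\top$ and applying the telescoping identity $A_1^\top A_1-A_2^\top A_2=A_1^\top(A_1-A_2)+(A_1-A_2)^\top A_2$ together with the bounds $\|A_i\|_{\mathrm{op}}\le\|A_i\|_F\le C_{\mathrm{params}}$ gives $\|W_1-W_2\|\le 2C_{\mathrm{params}}\|\Delta A\|_F+2\|\Delta B\|_F$. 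Combining with $\|x\|\le C_{\mathrm{out}}$, $\|d\|\le C_d$ and the triangle inequality produces the upper bound $2C_{\mathrm{params}}C_{\mathrm{out}}\|\Delta A\|_F+2C_{\mathrm{out}}\|\Delta B\|_F+C_d\|\Delta U\|_F+\|\Delta b\|$. A single Cauchy-Schwarz step against the vector $(\|\Delta A\|_F,\|\Delta B\|_F,\|\Delta U\|_F,\|\Delta b\|)$, whose Euclidean norm equals $\|\psi_1-\psi_2\|$, contributes the factor $\sqrt{4(C_{\mathrm{params}}^2+1)C_{\mathrm{out}}^2+C_d^2+1}$. Multiplying by $\alpha$ yields $L_\psi$ exactly as stated.

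The main subtlety is handling the quadratic dependence of $W$ on $A$: the telescoping identity linearizes $A_1^\top A_1-A_2^\top A_2$, while the parameter bound $C_{\mathrm{params}}$ converts this into a Lipschitz estimate. Some care is needed to switch between operator and Frobenius norms, using $\|\cdot\|_{\mathrm{op}}\le\|\cdot\|_F$ so that the single Frobenius bound on the full parameter vector $\psi$ suffices to control the matrix actions appearing in the estimate.
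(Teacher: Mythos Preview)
Your argument matches the paper's proof essentially line for line for the $L_\psi$ part: the paper also strips the $1$-Lipschitz activation, isolates $\alpha(W_1-W_2)x+\alpha((U_1-U_2)d+b_1-b_2)$, bounds $\|W_1-W_2\|$ via the same telescoping $A_1^\top A_1-A_2^\top A_2=A_1^\top(A_1-A_2)+(A_1^\top-A_2^\top)A_2$ and the parameter bound, and closes with the same Cauchy--Schwarz step against $(\|\Delta A\|,\|\Delta B\|,\|\Delta U\|,\|\Delta b\|)$. The only difference is that the paper does not prove $L_x<1$ but simply cites \cite{ryu2022large,winston2020monotone}; your explicit expansion of $\|(I-\alpha M)v\|^2$ is a welcome addition, though note that your bound gives $\le 1$ rather than $<1$ at the endpoint $\alpha=2m/\|I-W\|^2$, so strictly speaking the strict contraction requires $\alpha$ in the open interval (a point the paper's statement of the range also glosses over).
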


\subsection{Gradient-Descent Based Network}
In inverse problems, it is of interest to learn a  regularization operator $R$ ~\cite{alberti2021learning} arising from a problem of the form
\begin{equation}
    \min_x \frac{1}{2}\|Ax - b\|^2 + \frac{1}{2}\|Rx\|^2
\end{equation}
In this case, one may learn a gradient descent scheme where the fixed point operator is given by
\begin{equation}
    T_{\psi}(x;d) = x - \alpha(A^\top (Ax - d) + R^\top R x)
    \label{eq: learned_gradient_descent}
\end{equation}
Here, $A$ is given and $\psi = R \in \mathbb{R}^{n_1 \times n}$ is the regularization operator to be learned and $\alpha \in \left(0, \frac{2}{\lambda_{\text{max}}(A^\top A + R^\top R)}\right)$~\cite{ryu2022large}.

\newcommand{\lemmaLearnedGradDescentRegularizer}[1]{
       Let $T_{\psi}$ be given by~\eqref{eq: learned_gradient_descent}. Under Assumption~\ref{assumption: bounded_inputs_outputs}, $T_{\psi}$ satisfies Assumption~\ref{assumption: T_lipschitz_contractive} with $L_{\psi} = 2 \alpha C_{\text{out}} C_{\text{params},\Psi}$ and $L_x < 1$.
}

\begin{lemma}[Lipschitz Property of Learned Gradient Descent]
    \label{lemma: learnedGD}
    \lemmaLearnedGradDescentRegularizer{main}
\end{lemma}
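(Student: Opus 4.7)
The plan is to verify both parts of Assumption~\ref{assumption: T_lipschitz_contractive} by direct computation, exploiting the fact that $T_\psi$ depends on the learnable parameter $R$ only through the symmetric product $R^\top R$. For the contractivity in $x$, the affine terms $x$ and $-\alpha A^\top d$ cancel in a pairwise difference, yielding
\begin{equation*}
    T_\psi(x;d) - T_\psi(y;d) = \bigl(I - \alpha(A^\top A + R^\top R)\bigr)(x-y).
\end{equation*}
Since $M := A^\top A + R^\top R$ is symmetric positive semidefinite, the eigenvalues of $I - \alpha M$ lie in $[1-\alpha\lambda_{\max}(M),\,1]$. The stepsize condition $\alpha < 2/\lambda_{\max}(A^\top A + R^\top R)$ then forces the operator norm of $I - \alpha M$ to be strictly below $1$, giving $L_x < 1$.

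For the Lipschitz dependence on $\psi = R$, I would fix $d \in \mathcal{D}$ and $x \in \mathcal{X}^\star$ and compute
\begin{equation*}
    T_{R_1}(x;d) - T_{R_2}(x;d) = -\alpha\bigl(R_1^\top R_1 - R_2^\top R_2\bigr)x.
\end{equation*}
The key algebraic step is the standard telescoping identity for Lipschitz bounds on quadratic matrix expressions,
\begin{equation*}
    R_1^\top R_1 - R_2^\top R_2 = R_1^\top(R_1 - R_2) + (R_1 - R_2)^\top R_2.
\end{equation*}
Submultiplicativity of the operator norm together with Assumption~\ref{assumption: bounded_inputs_outputs} (giving $\|R_i\| \leq C_{\rm{params},\Psi}$ and $\|x\| \leq C_{{\rm out},T}$, which equals $C_{\rm{out}}$ here because $k=n$ and $P_\phi$ is the identity in this example) would then yield
\begin{equation*}
    \|T_{R_1}(x;d) - T_{R_2}(x;d)\| \leq 2\alpha\, C_{\rm{out}}\, C_{\rm{params},\Psi}\, \|R_1 - R_2\|,
\end{equation*}
matching the claimed Lipschitz constant.

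The main obstacle to watch for is a matrix-norm subtlety: the Euclidean norm on the parameter space corresponds to the Frobenius norm of $R$, whereas the submultiplicative estimate uses the operator norm. This is harmless because $\|R\|_{\rm op} \leq \|R\|_F$, so the Frobenius bound $\|R\|_F \leq C_{\rm{params},\Psi}$ transfers to the operator-norm bound used above without enlarging the constant. A secondary concern is that $\alpha$ should be chosen uniformly over $\Psi$ so that $L_x$ is a single constant independent of $\psi$; the estimate $\lambda_{\max}(A^\top A + R^\top R) \leq \|A\|^2 + C_{\rm{params},\Psi}^2$ makes such a uniform choice of $\alpha$ explicit.
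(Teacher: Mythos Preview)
Your approach matches the paper's: the same telescoping identity $R_1^\top R_1 - R_2^\top R_2 = R_1^\top(R_1-R_2) + (R_1-R_2)^\top R_2$ followed by submultiplicativity and Assumption~\ref{assumption: bounded_inputs_outputs} to extract the constant $2\alpha C_{\text{out}} C_{\text{params},\Psi}$. Your remarks on the Frobenius-versus-operator-norm issue and on choosing $\alpha$ uniformly over $\Psi$ are correct refinements that the paper does not spell out.

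One small slip in your contractivity argument: from the eigenvalue interval $[1-\alpha\lambda_{\max}(M),\,1]$ together with the stepsize bound $\alpha<2/\lambda_{\max}(M)$ you only obtain eigenvalues in $(-1,1]$, so $\|I-\alpha M\|\leq 1$ with equality whenever $\lambda_{\min}(M)=0$. Strict contractivity therefore needs $M=A^\top A+R^\top R\succ0$, which is the implicit well-posedness assumption behind the Tikhonov problem. The paper sidesteps this by simply citing a reference for contractivity rather than arguing from the spectrum, so your more explicit treatment is welcome but should add this hypothesis.
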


\begin{figure}
    \centering
    Bounds for MNIST Dataset
    \\
    \vspace{1mm}
    \includegraphics[width=0.4\textwidth]{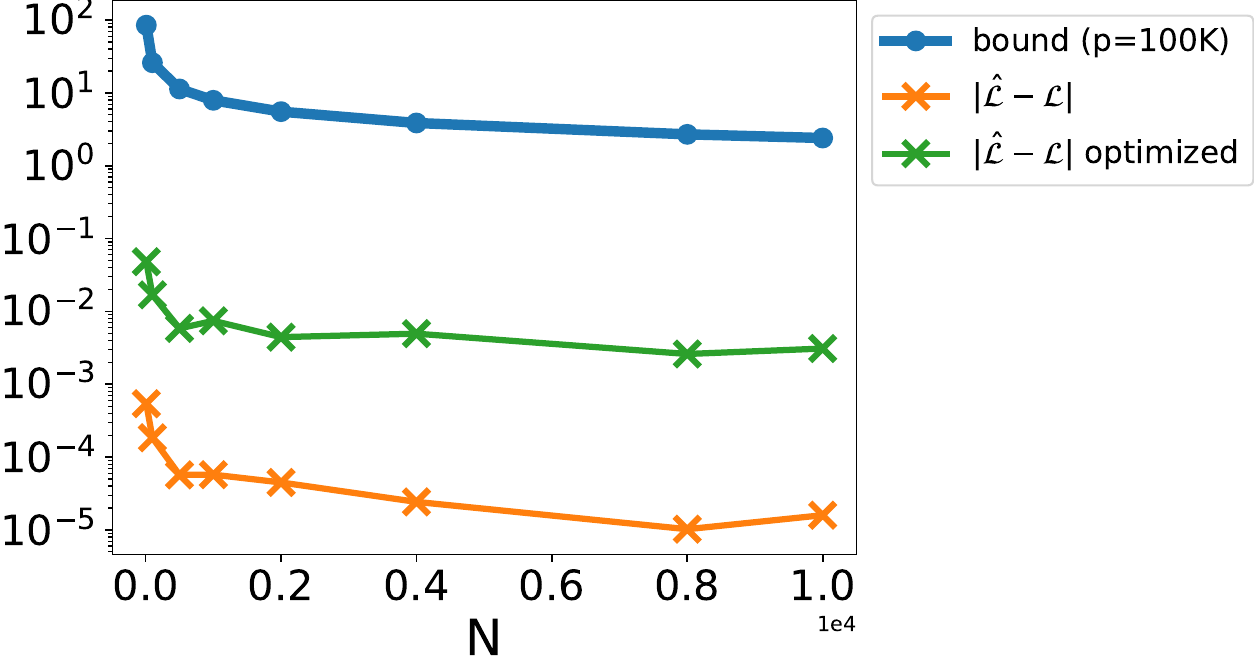}
    \caption{\small Generalization bound for the MNIST dataset. Here, the blue line represents the generalization bound, and the green and orange line represent the generalization error for an optimized and random set of weights, respectively}
    \label{fig: results_mnist}
\end{figure}

\section{Experiments}

\subsection{Data Setup}
We use a set of different datasets to test our generalization bounds.
The first dataset is the LoDoPaB dataset \cite{leuschner2021lodopab, heaton2022wasserstein} and the MNIST dataset~\cite{lecun1998mnist}.
We choose the MNIST dataset because it is a standard benchmark for image classification tasks, which allows us to evaluate our framework using the cross-entropy loss composed with the softmax activation. Moreover, we choose the LoDoPab dataset because it is widely adopted in \emph{learning-to-optimize} studies, where the network architecture is designed to resemble an optimization algorithm such as our learned gradient descent model.

For the LoDoPaB data, CT measurements are simulated with a parallel beam geometry with a sparse-angle setup of only $30$ angles and $183$ projection beams, resulting in 5,490 equations and 16,384 unknowns. In these experiments, $1.5\%$ Gaussian noise corresponding to each individual beam measurement is added to the measurements.
The images have a resolution of $128 \times 128$ pixels.
The ellipse training and test sets contain 11,000 total sample pairs.
The LoDoPaB training and test sets contain 22,000 sample pairs, respectively.

\subsection{Experimental Setup and Results}
For all three architectures, we estimate the required constants; these include $C_{\text{out}},  C_{\text{params}}$, and $C_d$. To estimate $C_\text{out}$, we compute $x^\star_d$ for all $d$ in the dataset over 100 randomly generate samples of $\theta$. The constant \(L_x\) is not estimated but deliberately chosen to be 0.9 to ensure the network’s contractivity. This is achieved through \emph{spectral normalization}~\cite{miyato2018spectral}, which enforces a bound on the spectral norm of the weight matrices using the power iteration method~\cite{golub2013matrix}.

To compute $C_d$, we find the maximum norm of each of the samples provided. To find $C_\text{params}$, we normalize each of the weights to have norm 1; specifically, we bound $\|A\|, \|B\|, \|U\|, \|b\| \leq 1$ so that $C_{\text{params}} = \sqrt{4}$ for MON. We also bound $\|W\|, \|U\|, \|b\| \leq 1$ so that $C_\text{params} \leq \sqrt{3}$ for the single layer contractive network, and $\|R\| \leq 1$ for the learned gradient descent network.
For the CT dataset, we choose $P_\phi(x) = x$, and for the MNIST dataset, we choose a linear final mapping layer given by $P_\phi(x) = \phi x$, where $\phi \in \mathbb{R}^{n \times k}$. Finally, we choose the ReLU activation function for all architectures for the CT dataset and the leaky ReLU activation for the MNIST data, which are commonly used in their respective tasks. Note both of these are non-expansive.

Moreover, we choose the $l_1$-norm for $\ell(\cdot, \cdot)$ and the cross-entropy function for the MNIST dataset, and estimate their maximum bound in a similar manner to $C_\text{out}$.
We note these choices of $\ell$ have Lipschitz constant $L_\ell = 1$ for $\ell = \|\cdot\|_1$ and $L_\ell = 2$ for the cross entropy function composed with softmax (see Lemma~\ref{lemma: Lipschitz_CE}) in the appendix.
We set $\delta = 10^{-2}$ for all experiments.

Figure~\ref{fig: results} shows the generalization bound over different sample sizes $N$ and different number of parameters $p$.
As expected, the bound decreases for smaller $N$ and $p$.
We also observe similar results for the MNIST dataset in Figure~\ref{fig: results_mnist}, where we show the bound (in blue), the generalization error of a randomly initialized network (orange), and the generalization error of an optimized implicit network. Here, the $x$ axis represents the number of samples used to estimate both, the bound and the errors. As expected, all bounds and errors behave like $\mathcal{O}(1/\sqrt{N})$. Moreover, we observe that the bounds are not particularly tight; however, this is to be expected for bounds that are especially general such as the one presented in this work, which can be applied to a large family of implicit networks. We also include a robustness experiment in Appendix~\ref{sec:additions_experiments}.

Finally, we note that in Figure~\ref{fig: results_mnist}, the optimized weights lead to a larger generalization discrepancy; this is because for randomly initialized neural networks, the predicted probabilities are roughly uniform, leading to the same prediction regardless of batchsize.

\section{Conclusion}
In this work, we derive a generalization bound for contractive implicit neural networks. Our bound employs a covering number argument to bound the Rademacher complexity of these architectures. Our work relies on Lipschitz assumptions of the fixed point operator as well as boundedness of the fixed point outputs.
We note that while we only consider the implicit network, one can extend our bounds when embedding these as a differentiable layer in a larger architecture~\cite{amos2017optnet}. Some recent work that may aid in the further study of generalization of implicit networks reveals connections between implicit networks and Gaussian processes~\citep{gao2023wide} and global convergence of implicit networks~\citep{gao2022global}.
We focus on providing a foundational generalization bound for a broad class of implicit neural networks defined via contractive fixed-point operators. While comparative studies with depth-equivalent explicit (unrolled) networks are of interest, a principled comparison would require matched assumptions and corresponding explicit-network bounds and is therefore left for future work.
Finally, while contractivity can be enforced through techniques such as spectral normalization and has proven effective in many settings~\cite{heaton2023explainable, mckenzie2024three}, such enforcement can potentially limit the expressive capacity of these networks. This trade-off helps explain why many successful implementations of implicit networks do not enforce strict global contractivity during training in practice. Instead, they rely on alternative stability mechanisms such as careful initialization and Jacobian regularization. Understanding how these stability mechanisms can play a role for a better generalization bound remains an important direction for future work.

\section*{Acknowledgments}
Part of this research was performed while the authors were visiting the Institute for Pure and Applied Mathematics (IPAM), which is supported by the National Science Foundation, award number DMS-1925919. Samy Wu Fung was also partially funded by the National Science Foundation, award number DMS-2309810.
Benjamin Berkels was supported by the German research foundation (DFG) within the Collaborative Research Centre SFB 1481 ``Sparsity and Singular Structures'' (Project ID 442047500, Project A08).

\bibliographystyle{apalike}
\bibliography{refs}

\onecolumn
\newpage
\begin{center}
\large{\textbf{A Generalization Bound for a Family of Implicit Networks: Supplementary Materials}}
\end{center}
\setcounter{section}{0}
\renewcommand{\thesection}{\Alph{section}}
\section{Proofs}
For ease of presentation, we restate the results before their proofs.
\subsection{Proof of Lemma~\ref{lemma: Lipschitz_fixed_points}}
\noindent
\textbf{Lemma \ref{lemma: Lipschitz_fixed_points}.}
\textit{\lemmaLipschitzFixedpoints{app}}

\noindent Before beginning our proof, we note that since we treat the network parameters as inputs, we write $x^\star_{\psi,d}$ as $x^\star_d(\psi)$ in this proof for ease of presentation.
\begin{proof}
    Fix an input $d \in \mathcal{D}$. For any $\psi_1, \psi_2 \in \Psi$, we have
    \begin{align}
        \|x_d^\star(\psi_1) - x_d^\star(\psi_2)\|
        &=
        \|T_{\psi_1}(x_d^\star(\psi_1) \:; d) -  T_{\psi_2}\left(x_d^\star(\psi_2) \:; d\right) \| \label{eq: temp_sum1}
        \\
        &=
        \|T_{\psi_1}(x_d^\star(\psi_1) \:; d) -  T_{\psi_1}\left(x_d^\star(\psi_2) \:; d\right)
        +
        T_{\psi_1}\left(x_d^\star(\psi_2) \:; d\right)
        -
        T_{\psi_2}\left(x_d^\star(\psi_2) \:; d\right) \|
        \\
        &\leq
        \|T_{\psi_1}(x_d^\star(\psi_1) \:; d) -  T_{\psi_1}\left(x_d^\star(\psi_2) \:; d\right)\| +
        \| T_{\psi_1}\left(x_d^\star(\psi_2) \:; d\right)
        -
        T_{\psi_2}\left(x_d^\star(\psi_2) \:; d\right) \|
        \\
        &\leq
        L_x \|x_d^\star(\psi_2) - x_d^\star(\psi_1)\| + L_{\psi} \| \psi_2 - \psi_1\|. \label{eq: temp_sum2}
    \end{align}
    Since the first term in~\eqref{eq: temp_sum2} contains $\|x_{d}^\star(\psi_1) - x_{d}^\star(\psi_2)\|_2$, we can expand this term again following the same procedure~\eqref{eq: temp_sum1}--\eqref{eq: temp_sum2}. Expanding $N$ times, we obtain
    \begin{align}
        \|x_{d}^\star(\psi_1) - x_{d}^\star(\psi_2)\| &\leq L_x \|x_d^\star(\psi_2) - x_d^\star(\psi_1)\| + L_{\psi} \| \psi_2 - \psi_1\|\\
        &\leq L_x \left(L_x \|x_d^\star(\psi_2) - x_d^\star(\psi_1)\| + L_{\psi} \| \psi_2 - \psi_1\|\right) + L_{\psi} \| \psi_2 - \psi_1\|\\
        &\qquad\vdots\\
        &\leq (L_x)^{N+1} \| x^\star_d(\psi_2) - x^\star_d(\psi_1)\| + L_{\psi} \sum_{k=0}^N (L_x)^{k}\| \psi_2 - \psi_1\|.
        \label{eq: temp_sumN}
    \end{align}
    Taking the limit as $N \to \infty$ and using geometric series (since $L_x < 1$), we obtain
    \begin{equation}
        \|x_{d}^\star(\psi_1) - x_{d}^\star(\psi_2)\| \leq \frac{L_{\psi}}{1 - L_x} \|\psi_2 - \psi_1\|,
    \end{equation}
    where we observe that the second term in~\eqref{eq: temp_sumN} goes to 0.
\end{proof}

\subsection{Proof of Lemma~\ref{lemma: Lipschitz_network}}
\noindent
\textbf{Lemma \ref{lemma: Lipschitz_network} .}
\textit{\lemmaLipschitzNetwork{app}}

\begin{proof}
Fix an input $d \in \mathcal{D}$. For any $\theta_1,\theta_2 \in \Theta$, we have $\theta_i=(\phi_i,\psi_i)$ and get
\begin{align}
    \| P_{\phi_1}(x^\star_{\psi_1,d}) - P_{\phi_2} (x^\star_{\psi_2,d}) \|
    &= \|  P_{\phi_1}(x^\star_{\psi_1,d}) -  P_{\phi_1}( x^\star_{\psi_2,d}) + P_{\phi_1}(x^\star_{\psi_2,d}) - P_{\phi_2} (x^\star_{\psi_2,d}) \|
    \\
    &\leq
    \|  P_{\phi_1}(x^\star_{\psi_1,d}) -  P_{\phi_1}( x^\star_{\psi_2,d}) \| + \| P_{\phi_1}(x^\star_{\psi_2,d}) - P_{\phi_2} (x^\star_{\psi_2,d}) \|
    \\
    ~^{\text{ using}}_{\text{Assumption~\ref{assumption: T_lipschitz_contractive}}}&\leq
    L_{P,x}\| x^\star_{\psi_1,d} - x^\star_{\psi_2,d} \| + L_{P,\phi}\|\phi_1 - \phi_2\|
    \\
    ~^{\text{ using}}_{\text{Lemma~\ref{lemma: Lipschitz_fixed_points}}}&\leq L_{P,x} L \|\psi_1 - \psi_2\| + L_{P,\phi}\|\phi_1 - \phi_2\|
    \\
    &=(L_{P,x} L ,L_{P,\phi})\cdot(\|\psi_1 - \psi_2\|,\|\phi_1 - \phi_2\|)
    \\
    &\leq\sqrt{(L_{P,x} L)^2 +(L_{P,\phi})^2}\sqrt{\|\psi_1 - \psi_2\|^2+\|\phi_1 - \phi_2\|^2}
    \\
    &=\sqrt{(L_{P,x} L)^2 +(L_{P,\phi})^2}\|\theta_1 - \theta_2\|
\end{align}
\end{proof}
\subsection{Proof of Lemma~\ref{lemma: bound_rademacher}}
\noindent
\textbf{Lemma \ref{lemma: bound_rademacher}.}
\textit{\lemmaBoundRademacher{app}}
\begin{proof}

    This proof is structured as follows.
    \textbf{Step 1}, we show that the pseudometric defined in~\eqref{eq: pseudometric_def} associated with the Rademacher process is simply the l2 norm of the difference of two random variables.
    \textbf{Step 2}, we bound the Rademacher complexity (Definition~\ref{def: empirical_rademacher_complexity}) by another Rademacher process that only depends on the hypothesis function. \textbf{Step 3}, we bound the radius of the Rademacher process.
    \textbf{Step 4}, we obtain a bound on the Rademacher complexity using a covering number argument by plugging results from Steps 1 and 2 into Dudley's inequality.

    \textbf{Step 1}.
    We introduce a specific subgaussian process and a representation for its pseudometric.
    Let $\mathcal{T}$ be an index set and $z:\mathcal{T}\rightarrow\mathbb{R}^N$ a function. Furthermore, let $\epsilon \in \mathbb{R}^N$ be
    a Rademacher vector, i.e., a random vector with entries $\pm 1$. Then, the stochastic process $(Z_t)_{t \in \mathcal{T}}$ given by
    \begin{equation}
        Z_t := \sum_{i=1}^N \epsilon_i (z(t))_i = \epsilon^\top z(t),
    \end{equation}
    is a Rademacher process, cf. \cite[p. 225]{foucart2013mathematical}.
    Moreover, this process is subgaussian and we get
    \begin{equation}
        \mathbb{E} |Z_s - Z_t|^2 = \mathbb{E} \left| \epsilon^\top (z(s) - z(t)) \right|^2 = \|z(s) - z(t)\|^2.
    \end{equation}
    Thus, we have that
    \begin{equation}
        d(Z_s, Z_t) =  \left(\mathbb{E} |Z_s - Z_t|^2\right)^{1/2} = \| z(s) - z(t) \|.
        \label{eq: pseudometric_is_l2}
    \end{equation}

    \textbf{Step 2}. From Definition~\ref{def: empirical_rademacher_complexity} and \eqref{eq:l-circ-H}, we have that
    \begin{align}
        R_S(\mathcal{\ell \circ \mathcal{H}})
        &= \mathbb{E}_{\epsilon} \; \sup_{h \in \mathcal{H}} \frac{1}{N} \sum_{i=1}^N \epsilon_i \ell\big(h(d_i), y_i\big)
        \label{eq:rademacher_representation}
    \end{align}
    with $\epsilon \in \mathbb{R}^N$ a Rademacher vector.

Now, we need the so-called vector-valued contraction lemma \cite{maurer2016vector}. To point out that it is applicable in our case, we cite it here in full, just slightly adjusting it to our notation:

\textbf{Contraction lemma~\cite[Corollary 4]{maurer2016vector}} Let $\mathcal{X}$ be any set, $(x_1,\ldots,x_n)\in \mathcal{X}^n$, let $F$ be a class of functions $f:\mathcal{X}\to \ell_2$ and let $h_i:\ell_2\to \mathbb{R}$ be Lipschitz continuous functions with constant $L$.
Here, $\ell_2$ denotes the space of square summable sequences.
Then,
\[
\mathbb{E}_\epsilon\sup_{f\in F}\sum_{i} \epsilon_i\, h_i\!\big(f(x_i)\big)
\le \sqrt{2}\,L\,\mathbb{E}_{\tilde\epsilon}\sup_{f\in F}\sum_{i,k}\tilde\epsilon_{ik}\, (f(x_i))_k,
\]
where $\epsilon\in\mathbb{R}^n$ is a Rademacher vector, $\tilde\epsilon_{ik}$ is an independent doubly indexed Rademacher sequence and $(f(x_i))_k$ is the $k$-th component of $f(x_i)$.

    Since $\ell$ is Lipschitz by Assumption~\ref{assumption: lipschitz_loss}, we have that by the contraction lemma, choosing $h_i$ in the contraction lemma as $l(\cdot,y_i)$ and embedding $\mathbb{R}^m$ in $\ell_2$,
    \begin{equation}
        \mathbb{E}_{\epsilon} \sup_{h \in \mathcal{H}} \; \sum_{i=1}^N \epsilon_i\ell\big(h(d_i), y_i\big)
        \leq
        \sqrt{2} L_\ell  \mathbb{E}_{\tilde{\epsilon}} \; \sup_{h \in \mathcal{H}}  \sum_{i=1}^N \sum_{j=1}^m \tilde{\epsilon}_{ij} \big(h(d_i)\big)_j,
        \label{eq: bound1_rademacher}
    \end{equation}
    where $\tilde{\epsilon} \in \mathbb{R}^{N \times m}$ is a Rademacher matrix and $L_\ell$ is the Lipschitz constant of $\ell$%
    . Thus, we can bound the Rademacher complexity by plugging~\eqref{eq: bound1_rademacher} into~\eqref{eq:rademacher_representation} to get
    \begin{equation}
        \begin{split}
        R_S(\mathcal{\ell \circ \mathcal{H}})
        &\leq
        \frac{\sqrt{2} L_\ell}{N} \: \mathbb{E}_{\tilde{\epsilon}} \; \sup_{h \in \mathcal{H}}  \sum_{i=1}^N \sum_{j=1}^m \tilde{\epsilon}_{ij} \big(h(d_i)\big)_j
        =
        \frac{\sqrt{2} L_\ell}{N} \: \mathbb{E}_{\tilde{\epsilon}} \; \sup_{M \in \mathcal{M}}  \sum_{i=1}^N \sum_{j=1}^m \tilde{\epsilon}_{ij} M_{ij}
        \end{split}
        \label{eq: bound2_rademacher}
    \end{equation}
    where $\mathcal{M}$ is defined in~\eqref{eq: eq: calM_def}.

    \textbf{Step 3}. Next, using \textbf{Step 1}, we note that the process $(Z_M)_{M\in\mathcal{M}}$ given by
    \begin{equation}
        Z_M = \sum_{i=1}^N \sum_{j=1}^m \tilde{\epsilon}_{ij} M_{ij},
    \end{equation}
    i.e., the double sum from the right-hand-side of~\eqref{eq: bound2_rademacher}, is a Rademacher process. Thus, from~\eqref{eq: radius_Rademacher_definition}, and noting the identity
    \begin{equation}
        \begin{split}
        &\mathbb{E}_{\tilde{\epsilon}} \left|\sum_{i=1}^N \sum_{j=1}^m \tilde{\epsilon}_{ij} M_{ij} \right|^2=\mathbb{E}_{\tilde{\epsilon}} \left(\sum_{i=1}^N \sum_{j=1}^m \tilde{\epsilon}_{ij} M_{ij} \right)^2\\
        &=\mathbb{E}_{\tilde{\epsilon}} \left(\sum_{i=1}^N \sum_{j=1}^m {\tilde{\epsilon}_{ij}}^2 M_{ij}^2 + \sum_{i=1}^N \sum_{j=1}^m\sum_{\substack{k=1\\k\neq i}}^N \sum_{\substack{l=1\\l\neq j}}^m \tilde{\epsilon}_{ij} M_{ij}\tilde{\epsilon}_{kl} M_{kl}\right)\\
        &=\sum_{i=1}^N \sum_{j=1}^m  M_{ij}^2 + \sum_{i=1}^N \sum_{j=1}^m\sum_{\substack{k=1\\k\neq i}}^N \sum_{\substack{l=1\\l\neq j}}^m \underbrace{\mathbb{E}_{\tilde{\epsilon}}\tilde{\epsilon}_{ij} M_{ij}\tilde{\epsilon}_{kl} M_{kl}}_{=0},
        \end{split}
    \end{equation}
    we estimate the radius of this process as
    \begin{align}
        \Delta(\mathcal{M})
        &= \sup_{M \in \mathcal{M}} \sqrt{\mathbb{E}_{\tilde{\epsilon}} |Z_M|^2}
        \\
        &= \sup_{M \in \mathcal{M}} \sqrt{\mathbb{E}_{\tilde{\epsilon}} \left|\sum_{i=1}^N \sum_{j=1}^m \tilde{\epsilon}_{ij} M_{ij} \right|^2}
        \\
        &= \sup_{M \in \mathcal{M}} \sqrt{\sum_{i=1}^N \sum_{j=1}^m \left| M_{ij} \right|^2}
        \\
        &= \sup_{h \in \mathcal{H}} \sqrt{\sum_{i=1}^N \| h(d_i)\|^2}
        \\
        &= \sup_{\theta=(\phi,\psi) \in \Theta} \sqrt{\sum_{i=1}^N \| P_\psi(x^\star_{\psi, d_i})\|^2}
        \\
        &\leq \sqrt{N} C_{\rm{out}},
        \label{eq: radius_upper_bound}
    \end{align}
    where the last inequality is a result of Assumption~\ref{assumption: bounded_inputs_outputs}.

    \textbf{Step 4}. Starting with \eqref{eq: bound2_rademacher} and then applying Dudley's inequality (see Theorem~\ref{thm: dudley}) on the right hand side and plugging in the radius bound from~\eqref{eq: radius_upper_bound} into , we arrive at our result
    \begin{align}
        R_S(\mathcal{\ell \circ \mathcal{H}}) &\leq
        \frac{\sqrt{2} L_\ell}{N} \: \mathbb{E}_{\tilde{\epsilon}} \; \sup_{M \in \mathcal{M}}  \sum_{i=1}^N \sum_{j=1}^m \tilde{\epsilon}_{ij} M_{ij}
        \\
        &\leq  \frac{8 L_\ell}{N}\int_0^{\frac{\sqrt{N}C_{\rm{out}}}{2}} \sqrt{ \log(\mathcal{N}(\mathcal{\mathcal{M}}, \| \cdot \|, r))} dr
    \end{align}
    where $\mathcal{N}$ is the covering number defined in Definition~\ref{def: covering_number}, and the pseudometric function $\| \cdot \|$ is given by~\eqref{eq: pseudometric_is_l2} from Step 1.
\end{proof}

\subsection{Proof of Lemma~\ref{lemma: bound_covering_number}}
\noindent
\textbf{Lemma \ref{lemma: bound_covering_number}.}
\textit{\lemmaBoundCoveringNumber{app}}
\begin{proof}
Let $r>0$ and $L = \dfrac{L_{\psi}}{1 - L_x}$ from Lemma~\ref{lemma: Lipschitz_fixed_points} and $\hat L=\sqrt{((L_{P,x}L)^2+L_{P,\phi}^2)}$ from Lemma~\ref{lemma: Lipschitz_network}. Moreover, let $k:=\mathcal{N}(\Theta, \| \cdot \|, \frac{r}{\hat L})$. Then, there are $\theta_1,\ldots,\theta_k\in\Theta$ with $\theta_i=(\phi_i,\psi_i)$ such that
\begin{equation}
\label{eq:ThetaCover}
\Theta\subset\bigcup_{i=1}^kB_{\frac{r}{\hat L}}(\theta_i).
\end{equation}
Here, $B_r(x)$ denotes a ball with radius $r>0$ centered at $x$, i.e., $B_r(x):=\{y\in X:\|x-y\|_X<r\}$ for any normed vector space $(X,\|\cdot\|_X)$.
Let $y\in\mathcal{M}$. By definition of $\mathcal{M}$ (cf.~\eqref{eq: eq: calM_def}), there is a $\theta=(\phi,\psi) \in \Theta$ such that $y=P_\phi(x^\star_{\psi, d})$. By \eqref{eq:ThetaCover}, there is $j\in\{1,\ldots,k\}$ such that $\theta\in B_{\frac{r}{\hat L}}(\theta_j)$. With this we get
\begin{align}
\|y-P_{\phi_j}(x^\star_{\psi_j, D})\|=\|P_\phi(x^\star_{\psi, d})-P_{\phi_j}(x^\star_{\psi_j, D})\|\leq \hat L\|\theta-\theta_j\|\overset{\theta\in B_{\frac{r}{\hat L}}(\theta_j)}{\leq} \hat L\frac{r}{\hat L}=r.
\end{align}
Thus, we have shown $y\in B_r(P_{\phi_j}(x^\star_{\psi_j, D}))$. Since $x\in\mathcal{M}$ was arbitrary, we get
\begin{equation}
    \mathcal{M}\subset\bigcup_{i=1}^k B_r(P_{\phi_i}(x^\star_{\psi_j, D})).
\end{equation}
This immediately implies
\begin{equation}
    \label{eq:CoverBoundMByTheta}
    \mathcal{N}(\mathcal{M}, \| \cdot \|, r) \leq k = \mathcal{N}(\Theta, \| \cdot \|, \tfrac{r}{\hat L}).
\end{equation}
By Assumption~\ref{assumption: bounded_inputs_outputs}, we have, for $\theta=(\phi,\psi)\in\Theta$, that $\|\theta\|\leq \sqrt{\|\phi\|^2+\|\psi\|^2}\leq \sqrt{C_{\rm{params},\Phi}^2+C_{\rm{params},\Psi}^2}=:C_{\rm{params}}$ and thus
\begin{equation}
    \Theta = \{ \theta \in \mathbb{R}^p \; : \; \|\theta\| \leq C_{\rm{params}} \}= \{ \theta \in \mathbb{R}^p \; : \; \tfrac{1}{C_{\rm{params}}}\|\theta\| \leq 1 \}
\end{equation}
Thus, $\Theta$ is (contained in) the closed one ball of $\mathbb{R}^p$ with respect to the norm $\tfrac{1}{C_{\rm{params}}}\|\cdot\|$. Using \cite[Proposition C.3]{foucart2013mathematical}, we get for any $t>0$
\begin{equation}
    \label{eq:ThetaOneBallCoverBound}
    \mathcal{N}\left(\Theta, \tfrac{1}{C_{\rm{params}}}\|\cdot\|, t\right)\leq\left(1+\tfrac{2}{t}\right)^p.
\end{equation}
Noting that $\|\cdot\|< t$ is equivalent to $c\|\cdot\|< ct$ for any $c>0$, it holds that
\begin{equation}
\label{eq:CoverScaleInvar}
\mathcal{N}(\Theta, \|\cdot\|, t)=\mathcal{N}(\Theta, c\|\cdot\|, ct).
\end{equation}
Combining the arguments collected above results in
\begin{equation}
    \begin{split}
    \mathcal{N}(\Theta, \| \cdot \|, \tfrac{r}{\hat L})&\overset{\text{\eqref{eq:CoverScaleInvar}}}{=}\mathcal{N}\left(\Theta, \tfrac{1}{C_{\rm{params}}}\| \cdot \|, \tfrac{1}{C_{\rm{params}}}\tfrac{r}{\hat L}\right)\\&\overset{\text{\eqref{eq:ThetaOneBallCoverBound}}}\leq\left(1+\frac{2}{\tfrac{1}{C_{\rm{params}}}\tfrac{r}{\hat L}}\right)^p=\left(1+\frac{2\hat LC_{\rm{params}}}{r}\right)^p,
    \end{split}
\end{equation}
which, together with \eqref{eq:CoverBoundMByTheta}, concludes the proof.
\end{proof}

\subsection{Proof of Lemma~\ref{lemma: bound_loss}}
\noindent
\textbf{Lemma \ref{lemma: bound_loss}.}
\textit{\lemmaBoundLoss{app}}
\begin{proof}
    Since $\mathcal{D}$
    is a compact set (Assumption~\ref{assumption: compact_support}) and $\ell$ is continuous with respect to its arguments (Assumption~\ref{assumption: lipschitz_loss}), we have that $\ell$ must be bounded.
\end{proof}

\subsection{Proof of Theorem~\ref{thm: main_theorem}}
\noindent
\textbf{Theorem \ref{thm: main_theorem}.}
\textit{\thmGeneralizationBound{app}}

\begin{proof}
    Plugging the result from Lemma~\ref{lemma: bound_covering_number} into Lemma~\ref{lemma: bound_rademacher}, we have that
    \begin{align}
        R_S(\ell \circ \mathcal{H})
        &\leq \frac{8L_\ell}{N} \int_{0}^{\sqrt{N}C_{\rm{out}}/2} \sqrt{\log\left( 1 + \frac{2\hat LC_{\rm{params}}}{r} \right)^p} dr
        \\
        &= \frac{8L_\ell\sqrt{p}}{N}\int_{0}^{\sqrt{N}C_{\rm{out}}/2} \sqrt{\log\left( 1 + \frac{2\hat LC_{\rm{params}}}{r} \right)} dr. \label{eq: temp_rademacher_bound}
    \end{align}
    Using
    \begin{equation}
        \int_0^\alpha \sqrt{1 + \frac{\beta}{t}} dt\overset{\varphi(t)=\frac{\beta}{t}}=\int_0^\alpha \sqrt{1 + \frac{1}{\varphi(t)}}\beta\varphi'(t) dt=\beta\int_0^{\frac{\alpha}{\beta}} \sqrt{1 + \frac{1}{t}}dt,
    \end{equation}
    and \cite[Lemma C.9]{foucart2013mathematical}, we get that
    \begin{equation}
        \int_0^\alpha \sqrt{1 + \frac{\beta}{t}} dt \leq \alpha \sqrt{\log \left( e \cdot \left(1 + \frac{\beta}{\alpha}\right) \right)}, \quad \text{ for all } \; \alpha, \beta > 0.
    \end{equation}
    Consequently, by setting  $\beta = 2\hat LC_{\rm{params}}$ and $\alpha = \frac{\sqrt{N}C_{\rm{out}}}{2}$, we can further bound~\eqref{eq: temp_rademacher_bound} as
    \begin{align}
        R_S(\ell \circ \mathcal{H})
        &\leq \frac{8L_\ell \sqrt{p}}{N} \frac{\sqrt{N}C_{\rm{out}}}{2} \sqrt{\log\left( e \cdot \left(1 + \frac{2\hat LC_{\rm{params}}}{\sqrt{N}C_{\rm{out}}/2} \right) \right)}
        \\
        &=  \frac{4L_\ell C_{\rm{out}} \sqrt{p}}{\sqrt{N}} \sqrt{\log\left( e \cdot \left(1 + \frac{4\hat LC_{\rm{params}}}{\sqrt{N}C_{\rm{out}}} \right) \right)}.
    \end{align}
    Plugging this bound into Theorem~\ref{thm: expected_loss_error}, we obtain our desired result.
\end{proof}

\subsection{Proof of Lemma~\ref{lemma: vanilla_T_Lipschitz}}
\noindent
\textbf{Lemma \ref{lemma: vanilla_T_Lipschitz}.}
\textit{\lemmaVanillaTLipschitz{app}}

\begin{proof}
    Since $\|W\| < 1$ and $\sigma$ is 1-Lipschitz, we have contractivity guaranteed~\cite{ryu2022large}. Thus, we only need to show Lipschitzness in $\psi$ on the set of fixed points. Let $\psi_1 = (W_1, U_1, b_1) \in \Psi$ and $\psi_2 = (W_2, U_2, b_2) \in \Psi$ and $d \in \mathcal{D}$. Moreover, let $x \in \mathcal{X}^\star = \{ x \in \mathbb{R}^n \: : \: \exists \psi \in \Psi, d \in \mathcal{D} \; \text{ s.t. } \; x = T_{\psi}(x;d) \}$.
    We have
    \begin{align}
        \|T_{\psi_1}(x;d) - T_{\psi_2}(x;d)\|
        &= \| \sigma(W_1x + U_1d + b_1) - \sigma(W_2x + U_2d + b_2)\|
        \\
        &\leq \| (W_1x + U_1d + b_1) - (W_2x + U_2d + b_2)\|
        \\
        &\leq \| (W_1 - W_2) x \|+\| (U_1 - U_2) d \|+\|b_1 - b_2\|
        \\
        ~^{\text{ using }x \in \mathcal{X}^\star}_{\text{ and Assumption~\ref{assumption: bounded_inputs_outputs}}}&\leq C_{\text{out}} \| W_1 - W_2 \|+C_d \| U_1 - U_2 \|+ \|b_1 - b_2\|
        \\
        &=(C_{\text{out}},C_d,1)\cdot(\| W_1 - W_2 \|,\| U_1 - U_2 \|,\|b_1 - b_2\|)
        \\
        &\leq\sqrt{C_\text{out}^2 + C_d^2 + 1} \sqrt{\| W_1 - W_2 \|^2+\| U_1 - U_2 \|^2+\|b_1 - b_2\|^2}
        \\
        &=L_{\psi} \| \psi_1 - \psi_2 \|
    \end{align}
    where
    $
        L_{\psi} := \sqrt{C_\text{out}^2 + C_d^2 + 1}
    $
    and we have used the Cauchy-Schwarz inequality and that $\|\cdot\|$ is the Euclidean norm.
\end{proof}

\subsection{Proof of Lemma~\ref{lemma: MON_Lipschitz}}
\noindent
\textbf{Lemma \ref{lemma: MON_Lipschitz}.}
\textit{\lemmaMONLipschitzConstants{app}}
\begin{proof}
    It is well-known that $T$ is contractive in $x$~\cite{ryu2022large, winston2020monotone}.
    Thus, we only need to show Lipschitzness in $\psi$ on the set of fixed points.

    For $i=1,2$, let $\psi_i = (A_i, B_i, U_i, b_i)$ and $W_i=(1-m)I - A_i^\top A_i + B_i - B_i^\top$.
    Moreover, let $F_{W} = I - \alpha(I-W)$ and %
    $x \in \mathcal{X}^\star = \{ x \in \mathbb{R}^n \: : \: \exists \psi \in \Psi, d \in \mathcal{D} \; \text{ s.t. } \; x = T_{\psi}(x;d) \}$. Then,
    \begin{align}
        &\|T_{\psi_1}(x;d) - T_{\psi_2}(x;d) \| = \| \sigma( F_{W_1}x + \alpha(U_1d + b_1)) - \sigma( F_{W_2}x + \alpha(U_2d + b_2)) \|
        \\
        &\leq \| ( F_{W_1}x + \alpha(U_1d + b_1)) - ( F_{W_2}x + \alpha(U_2d + b_2)) \|
        \\
        &\leq \| (I - \alpha(I-W_1))x -  (I - \alpha(I-W_2))x \| + \| \alpha(U_1d + b_1) - \alpha(U_2d + b_2) \|
        \\
        &= \| \alpha (W_1 - W_2) x\| + \alpha\| (U_1-U_2)d + b_1 -b_2 \|
        \\
        &\leq \alpha (\|W_1 - W_2\|\|x\| + \| U_1 - U_2 \|\| d \| + \| b_1 -b_2 \|)\\
        &\leq \alpha (C_{\text{out}} \|W_1 - W_2\| + C_d \| U_1 - U_2 \| + \| b_1 -b_2 \|)
    \end{align}
    where the second inequality is due to 1-Lipschitzness of $\sigma$ and the last inequality arises from Assumption~\ref{assumption: bounded_inputs_outputs} and $x \in \mathcal{X}^\star$ to get $\|x\|\leq C_{\text{out}}$ and Assumption~\ref{assumption: compact_support} to get $\| d \|\leq C_d$, where $C_d = \max_d \{\|d\| : d \in \mathcal{D}  \}$.

    Recalling $W_i=(1-m)I - A_i^\top A_i + B_i - B_i^\top$, we get
    \begin{align}
        \|W_2 - W_1\| &= \| (1-m)I - A_1^\top A_1 + B_1 - B_1^\top - ((1-m)I - A_2^\top A_2 + B_2 - B_2^\top) \|
        \\
        &= \| A_2^\top A_2 - A_1^\top A_1 + B_1 - B_1^\top - B_2 + B_2^\top \|
        \\
        &\leq \| A_2^\top (A_2 - A_1) + (A_2^\top - A_1^\top) A_1 \| + \| B_1 - B_2 + B_2^\top - B_1^\top\|
        \\
        &\leq \| A_2^\top (A_2 - A_1) \| + \| (A_2^\top - A_1^\top) A_1 \|+ \| B_1 - B_2\| + \|B_2^\top - B_1^\top\|
        \\
        ~^\text{Assumption~\ref{assumption: bounded_inputs_outputs}}&\leq 2C_{\text{params}} \|A_2 - A_1\| +2\| B_2 - B_1 \|.
    \end{align}
    Combined with the above, we get
    \begin{align}
        &\|T_{\psi_1}(x;d) - T_{\psi_2}(x;d) \| \\
        &\leq \alpha (C_{\text{out}} (2C_{\text{params}} \|A_2 - A_1\| +2\| B_2 - B_1 \|) + C_d \| U_1 - U_2 \| + \| b_1 -b_2 \|)\\
        &=\alpha(2C_{\text{params}}C_{\text{out}}, 2C_{\text{out}}, C_d, 1) \cdot( \|A_2 - A_1\|, \| B_2 - B_1 \|, \| U_1 - U_2 \|, \| b_1 -b_2 \|)\\
        &\leq\alpha\sqrt{4C_{\text{params}}^2C_{\text{out}}^2+4C_{\text{out}}^2+ C_d^2+1}\sqrt{\|A_2 - A_1\|^2+\| B_2 - B_1 \|^2+\| U_1 - U_2 \|^2+\| b_1 -b_2 \|^2}\\
        &=\alpha\sqrt{4(C_{\text{params}}^2+1)C_{\text{out}}^2+ C_d^2+1}\| \psi_1 -\psi_2 \|,
    \end{align}
    where we have used the Cauchy-Schwarz inequality and that $\|\cdot\|$ is the Euclidean norm.
    Thus, we have that the Lipschitz constant of $T$ w.r.t $\psi = (A,B,U,b)$ is given by
    \begin{equation}
        L_{\psi} = \alpha\sqrt{4(C_{\text{params}}^2+1)C_{\text{out}}^2+ C_d^2+1}.
    \end{equation}
    This shows that that Assumption~\ref{assumption: T_lipschitz_contractive} is satisfied for this architecture.
\end{proof}

\subsection{Proof of Lemma~\ref{lemma: learnedGD}}
\noindent
\textbf{Lemma \ref{lemma: learnedGD}.}
\textit{\lemmaLearnedGradDescentRegularizer{app}}
\begin{proof}
    Contractivity of $T$ in $x$ is well-known when $\alpha$ is prescribed as above~\cite{ryu2022large}. Thus, we only need to show Lipschitzness of $T$ with respect to $R$ at the fixed points.

    Let $R_1, R_2 \in \mathbb{R}^{n_1 \times n}$ be given. Let $x \in \mathcal{X}^\star = \{ x \in \mathbb{R}^n \: : \: \exists \psi \in \Psi, d \in \mathcal{D} \; \text{ s.t. } \; x = T_{\psi}(x;d) \}$  belong to the set of fixed points. We have
    \begin{align}
        &\| T_{R_1}(x;d) - T_{R_2}(x;d) \|
        \\
        &= \left\| x - \alpha(A^\top (Ax - d) + R_1^\top R_1x) -
        \left( x - \alpha(A^\top (Ax - d) + R_2^\top R_2x)\right) \right\|
        \\
        &= \|\alpha (R_2^\top R_2 - R_1^\top R_1) x\|
        \\
        ~^{\text{ using }x \in \mathcal{X}^\star}_{\text{ and Assumption~\ref{assumption: bounded_inputs_outputs}}}&\leq \alpha C_{\text{out}} \| R_2^\top R_2 - R_2^\top R_1 + R_2^\top R_1 - R_1^\top R_1\|
        \\
        &\leq \alpha C_{\text{out}} \left(\| R_2^\top R_2 - R_2^\top R_1 \| + \|R_2^\top R_1 - R_1^\top R_1\|\right)
        \\
        ~^{\text{ using}}_{\text{Assumption~\ref{assumption: bounded_inputs_outputs}}}&\leq 2 \alpha C_{\text{out}} C_{\text{params},\Psi} \|R_2 - R_1\|
    \end{align}
\end{proof}

\subsection{Lipschitzness of Cross Entropy of Softmax}
\begin{lemma}
    \label{lemma: Lipschitz_CE}
    Let $s(x) = \frac{\exp(x)}{\sum_{i=1}^{n} \exp(x)_i}$ be the softmax function, and let $y \in \mathbb{R}^{n}$ be a one-hot vector. Moreover, consider function $g:\mathbb{R}^n\to\mathbb{R}, x\mapsto C(s(x),y)$, where $C$ is the cross entropy function.
    Then, then $g$ is well-defined and Lipschitz with constant 2.
\end{lemma}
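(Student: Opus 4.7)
The plan is to reduce $g$ to a simple closed form and then bound its gradient uniformly.

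First I would write out $g$ explicitly. Since $y$ is one-hot, there is a unique index $k\in\{1,\dots,n\}$ with $y_k=1$ and $y_i=0$ otherwise, so
\begin{equation}
g(x) = C(s(x),y) = -\sum_{i=1}^n y_i\log s(x)_i = -\log s(x)_k = -x_k + \log\!\Big(\sum_{i=1}^n e^{x_i}\Big).
\end{equation}
Because $\sum_i e^{x_i}>0$ and $s(x)_k>0$ for every $x\in\mathbb{R}^n$, the logarithm is always defined, so $g$ is well-defined (and smooth).

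Next I would compute the gradient. Differentiating the log-sum-exp gives $\partial_j \log\!\big(\sum_i e^{x_i}\big)=s(x)_j$, so
\begin{equation}
\nabla g(x) = s(x) - e_k,
\end{equation}
where $e_k$ is the $k$-th standard basis vector. The key observation is that both $s(x)$ and $e_k$ lie in the probability simplex, i.e.\ each has nonnegative entries summing to $1$. Hence $\|s(x)\|_2\le\|s(x)\|_1=1$ and $\|e_k\|_2=1$, and by the triangle inequality
\begin{equation}
\|\nabla g(x)\|_2 \le \|s(x)\|_2 + \|e_k\|_2 \le 2.
\end{equation}

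Finally, since $g$ is continuously differentiable on $\mathbb{R}^n$ with $\sup_x\|\nabla g(x)\|_2\le 2$, the mean value theorem (applied coordinatewise along the segment joining any $x$ and $x'$) yields $|g(x)-g(x')|\le 2\|x-x'\|_2$, which is the claimed Lipschitz bound. There is essentially no obstacle here; the only mild subtlety is recognizing that the factor $2$ comes for free from the triangle inequality between two unit-$\ell_1$ probability vectors, so no softmax-specific inequality beyond the standard gradient identity is needed.
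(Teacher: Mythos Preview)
Your proof is correct and follows essentially the same approach as the paper: write $g(x)=-\log s(x)_k$, compute $\nabla g(x)=s(x)-y$, and bound $\|\nabla g(x)\|_2\le\|s(x)\|_2+\|y\|_2\le 2$ using $\|s(x)\|_2\le\|s(x)\|_1=1$. You additionally make the mean value step explicit, which the paper leaves implicit after bounding the gradient.
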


\begin{proof}
    Suppose $y$ has value $1$ in the $k^{th}$ entry. Let $x\in\mathbb{R}^n$. Then, since the exponential function is never zero, we have
    \begin{equation}
        s(x)_j = \frac{\exp(x_j)}{\sum_{i=1}^n \exp(x_i)}> 0, \;\; j=1,\ldots,n.
    \end{equation}
    Thus,
    \begin{align}
        g(x) &= -y^\top \log(s(x))
        \\
        &= -\log(s(x)_k)
    \end{align}
    is well-defined.

    Next, we derive a formula for the gradient of $g$. We have
    \begin{align}
        \partial_{x_j} g(x) =-\frac{1}{s(x)_k} \partial_{x_j}(s(x)_k) = \begin{cases}
            \dfrac{\exp(x_j)}{\sum_{i=1}^n \exp(x_i)} &\text{when} \quad j \neq k
            \\
            \dfrac{\exp(x_k)}{\sum_{i=1}^n \exp(x_i)} - 1& \text{when} \quad j = k.
        \end{cases}
    \end{align}
    Thus, the gradient is given by $\nabla g(x) = s(x) - y$. Taking its norm, we estimate the Lipschitz constant of $g$ to be
    \begin{align}
        \| \nabla g(x) \|_2 = \| s(x) - y \|_2 &\leq  \underbrace{\| s(x) \|_2}_{\leq \|s(x)\|_1 = 1} + \| y \|_2
        \\
        &\leq 2.
    \end{align}
    where we note $\|y\|_2 = 1$ for one-hot vectors.
\end{proof}

\section{Robustness Experiment}
\label{sec:additions_experiments}
We show that the derived generalization bound is robust to perturbations to the parameters. In particular, we consider $\pm 10\%$ perturbations to the parameters $C_\text{params}, C_\text{out}, L_\psi,$ and $\hat{L}$ using the Contractive Single Layer architecture on the MNIST dataset. We note that one can only see three out of the nine bounds due to the similarity to one another.
\begin{figure}
    \begin{center}
    \includegraphics[width=0.6\textwidth]{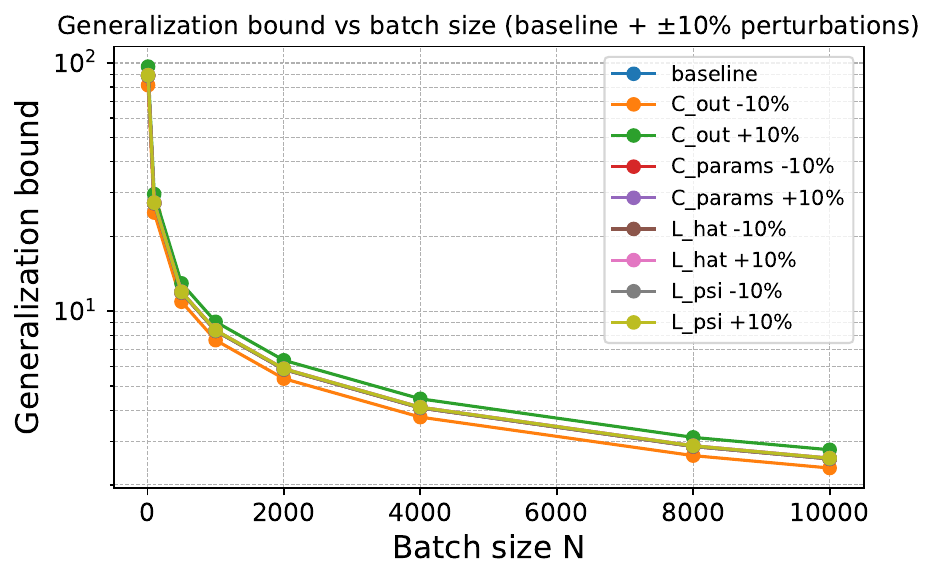}
    \end{center}
    \caption{Illustration of robustness of generalization bound under $\pm 10$ perturbations to $L_\psi, \hat{L}, C_\text{out},$ and $C_\text{params}$.}
\end{figure}

\end{document}